\documentclass{article}

\PassOptionsToPackage{numbers, compress}{natbib}
\usepackage[final]{neurips_2026}

\usepackage[utf8]{inputenc} 
\usepackage[T1]{fontenc}    
\usepackage{hyperref}       
\usepackage{url}            
\usepackage{booktabs}       
\usepackage{amsfonts}       
\usepackage{nicefrac}       
\usepackage{microtype}  
\usepackage{comment}
\usepackage{xcolor}         
\usepackage{graphicx}
\usepackage{float}
\usepackage{amsmath,amssymb,amsthm,mathtools,mathrsfs,bm}
\usepackage{enumitem}

\graphicspath{{./images/}}

\hypersetup{
  colorlinks=true,
  linkcolor=blue,
  citecolor=blue,
  urlcolor=blue
}

\newcommand{\safeincludegraphics}[2][]{%
  \IfFileExists{#2}{\includegraphics[#1]{#2}}{%
    \IfFileExists{./images/#2}{\includegraphics[#1]{#2}}{\fbox{Missing figure}}}%
}

\title{Does Order Matter: Connecting the Law of Robustness to Robust Generalization}

\author{%
  Mihir More\\
  Ashoka University, Sonepat, Haryana\\
  \And
  Aritra Das\\
  Ashoka University, Sonepat, Haryana\\
  Truth Audit Labs 
  \And
  Jaee Ponde\\
  Ashoka University, Sonepat, Haryana\\
  \And
  Himadri Mandal\\
  Indian Statistical Institute, Kolkata\\
  \And
  Vishnu Varadarajan\\
  Ashoka University, Sonepat, Haryana\\
  \And
  Debayan Gupta\\
  Ashoka University, Sonepat, Haryana\\
  Truth Audit Labs
}
\theoremstyle{plain}
\newtheorem{theorem}{Theorem}
\newtheorem{lemma}{Lemma}

\newtheorem{corollary}{Corollary}

\theoremstyle{definition}
\newtheorem{definition}{Definition}
\theoremstyle{remark}
\newtheorem{remark}{Remark}

\newcommand{\R}{\mathbb{R}}

\newcommand{\norm}[1]{\left\lVert #1 \right\rVert}

\newcommand{\X}{\mathcal{X}}

\newcommand{\diam}{\mathrm{diam}}

\usepackage{todonotes}

\begin{document}
\maketitle
\begin{abstract}

Bubek and Selke~\citep{bubeck2021universal} propose the connection between the 
law of robustness and robust generalization error as a 
\textit{fantastic} open problem. The Law of Robustness states 
that overparameterization is necessary for models to interpolate 
robustly, i.e., the function is required to be Lipschitz. Wu et al.~\citep{wu2023law} extend this law for arbitrary data distributions, proving L = \(\Omega(n^{1/d})\). Robust 
generalization, on the other hand, asks if small robust training loss implies small 
robust test loss. This can be studied using statistical learning techniques like Rademacher complexities. Accordingly, a bound on the Rademacher complexitiy of the robust loss class implies a bound on the Lipschitzness of the function class. We use this to explicitly connect the two for arbitrary data distributions.
i) We prove that the order of the Lipschitz bound remains the same when considering global Rademacher complexity of robust loss classes.
ii) At the local scale, that is, for subsets of functions with small empirical error, the order of the Lipschitz bound changes with the perturbation radius $\rho$ and localized concentration term $\sqrt {r/n}$.

\end{abstract}

\section{Introduction}
Deep neural networks have demonstrated exceptional performance across different machine learning applications~\citep{lecun2015deep}. One of the fundamental methodologies of deep learning involves overparameterized models. These networks are highly expressive and are often easy to optimize to (near) zero training error with SGD. \citep{Mingard_2025} In many settings, making the model even larger can improve test performance beyond the interpolation threshold, a phenomenon also known as double descent~\citep{nakkiran2019deepdoubledescentbigger}. Bubek et al.~\citep{bubeck2021law} provide a theoretical explanation for why overparameterization is necessary: they introduce a sharp law of robustness for two layer neural networks showing that achieving robust interpolation characterized by a small Lipschitz constant requires more parameters than interpolation alone. Bubek and Selke~\citep{bubeck2021universal} then prove a universal version under isoperimetric covariate distributions, showing that smooth interpolation for $O(1)$ Lipschitz requires $p \gg nd$ parameters. Furthermore, Wu et al.~\citep{wu2023law} extend the theory beyond isoperimetry, providing robust-interpolation lower bounds for arbitrary bounded-support distribution. Despite their remarkable success, deep networks remain vulnerable to subtle perturbations, as revealed through the existence of adversarial samples~\citep{goodfellow2015explainingharnessingadversarialexamples}. The most common mitigation strategy is adversarial training, where models are trained on adversarially perturbed examples to improve robustness~\citep{shafahi2019adversarialtrainingfree}. Other methods such as margin-based measures, flatness-based measures, and gradient-based measures have also been utilized to improve adverserial robustness~\citep{scaman2019lipschitzregularitydeepneural}. Another line of work pursues certified robustness by constraining the Lipschitz constant of the model, which provides formal guarantees on the maximum possible perturbation impact~\citep{NEURIPS2023_98a5c047}. 

However, none of these methods explicitly connect these two different notions of robustness, i.e., the Law of Robustness to robust generalization. 
Bubek and Selke~\citep{bubeck2021universal} refer to this connection as a ``fantastic open problem.'' We study this problem through the lens of Rademacher complexity, which provides distribution-dependent uniform bounds on the deviations of the induced loss class. 

We first show that, under the assumption of overfitting, we can bound the robust generalization error. We then apply global Rademacher bounds to obtain upper and lower bounds on the complexity of the loss class. This allows us to derive a bound on the Lipschitz constant of the robust loss, which we observe does not change its order and remains $\Omega(n^{1/d})$, consistent with the distribution-free setting in \citep{wu2023law}.

However, global Rademacher complexity is too coarse for interpolation and low-error regimes, because it measures the complexity over the entire loss class rather than the small-risk region selected by the learning process ~\citep{bartlett2005local,koltchinskii2006local}. Local Rademacher complexity allows us to compute the complexity of subclasses with small empirical risk, thus providing tighter bounds. Covering numbers can be used to relate metric entropy estimates to local Rademacher complexity bounds ~\citep{LeiDingBi2015}. Motivated by this, we polynomially bound the metric entropy of Lipschitz function classes using covering numbers and graph theory. We use this result to bound the metric entropy of the robust square loss, and then prove that polynomial entropy implies a local bound on the Rademacher complexity. 

We extend our previously obtained lower bounds on the Robust Generalization gap in the global setting (where we showed that the order of the Lipschitz constant does \textit{not} change), to the local setting. We now find that the bound on the Lipschitz constant of the robust loss \textit{is dependent} upon the perturbation radius $\rho$ and localized concentration term $\sqrt {r/n}$. So we answer the titular question, \textit{does order matter}? It turns out this depends on how closely one looks!

\section{Related Work}
\subsection{Adversarial robustness, adversarial training, and Lipschitz}
Modern neural networks are vulnerable to \emph{adversarial examples}, a phenomenon first identified by \citep{szegedy2013intriguing,goodfellow2014explaining}. 
The most widely used defense is \emph{adversarial training}, which minimizes a worst-case objective over bounded perturbations \citep{madry2018towards}, with principled variants such as TRADES making explicit the trade-off between standard and robust accuracy \citep{zhang2019trades}.
Furthermore, parameter-free evaluation suites such as AutoAttack are commonly used for reliable comparison due to robustness being highly sensitive to the threat model
\citep{croce2020reliable}. Another line of work deals \emph{certified robustness} by controlling (global or local) Lipschitz.
Bounding a model's Lipschitz constant yields worst-case guarantees on output variation under input
perturbations, motivating Lipschitz regularization and Lipschitz-constrained architectures. Examples such as Parseval networks \citep{cisse2017parseval}, Lipschitz-margin training
for scalable certification \citep{tsuzuku2018lipschitz}, and more recent work emphasizing
expressive Lipschitz networks for improved certified robustness \citep{zhang2022rethinking}. However, computing tight global Lipschitz constants for deep neural networks is difficult, so much work has focused on computable bounds and approximations.
CLEVER relates robustness evaluation to local Lipschitz estimation \citep{weng2018clever}, while
optimization-based methods provide tighter upper bounds for neural networks \citep{fazlyab2019efficient}.
Zuhlke and Kudenko~\citep{zuhlke2025lipschitzsurvey} provide a comprehensive survey unifying Lipschitz constants, adversarial attacks, robustness guarantees, and estimation methods.

\subsection{Overparameterization and laws of robustness}
Recent theory connects \emph{model size} and \emph{smooth interpolation} through lower bounds on the
Lipschitz constant of interpolating predictors.
Bubek et al.~\citep{bubeck2021law} formulate and prove a sharp trade-off for two-layer networks, suggesting that
overparameterization is necessary to achieve small Lipschitz constant while fitting the data.
Bubek and Selke~\citep{bubeck2021universal} establish a \emph{universal} version under isoperimetric-type covariate distributions,
showing that smooth interpolation can require substantially more parameters than interpolation alone. Wu et al.~\citep{wu2023law} extend this direction beyond isoperimetry and derive distribution-free
robust-interpolation lower bounds, including an $\Omega(n^{1/d})$ regime for general bounded-support
distributions.
More recently, Das et al.~\citep{das2025bregman}~generalize universal-law-type results beyond squared loss to
a broad family of Bregman divergence losses (covering common classification losses), and related work also studies robustness laws under weight-bounded classes
\citep{husain2021weight}.

\subsection{Robust generalization and robust overfitting}
A persistent challenge in adversarial robustness is robust generalization, where robust training error can be low while test error remains high.
It is known that robust learning can require significantly more labeled data than standard learning, even in simple data models 
\citep{schmidt2018moredata,carmon2019unlabeled,zhai2019unlabeled}.
It has also been shown by \citep{yin2019rademacher} that robust generalization bounds exhibit unavoidable dimension dependence unless additional structure is imposed. A closely related phenomenon is \emph{robust overfitting}: continued adversarial training can improve
robust training loss while degrading robust test performance
\citep{rice2020overfitting}. 
Subsequent work proposes mechanisms to mitigate robust overfitting via learned smoothing
regularization \citep{chen2021smoothening} or by analyzing which subsets of adversarial examples drive
overfitting and modifying training to counteract it \citep{yu2022robustoverfitting}. Moreover, it has recently been shown why why robust generalization is intrinsically challenging from an expressive
power perspective \citep{li2022why}. Finally, several recent papers develop stability-based and training-dynamics-based explanations for
robust generalization behavior \citep{zhang2024stability_shallow,cheng2024free,zhang2024lesscertain}.
Additionally, recent work further formalizes the clean-generalization vs. robust-overfitting dichotomy through
representation complexity and phase-transition analyses \citep{li2025cgro}.

\section{Law of Robustness and Robust Generalization}
\label{sec:gap-to-complexity}
\subsection{Notation}
All random variables are defined on a common probability space. The data-generating distribution is denoted by \(P\), and a generic sample is written as $Z=(X,Y)\sim P$, where \((X,Y)\) takes values in \(\mathcal X\times\mathcal Y\subseteq \mathbb R^d\times[-1,1]\). Let $\sigma^2=\mathbb E[\operatorname{Var}(Y\mid X)]$
be the irreducible noise level. The training sample is written as $S=\{Z_i=(X_i,Y_i)\}_{i=1}^n\sim P^n$.
 \(P_n\) denotes the empirical measure associated with \(S\). Thus, for any measurable function \(g:\mathcal X\times\mathcal Y\to\mathbb R\), we write
\[
Pg=\mathbb E[g(X,Y)]
\qquad\text{and}\qquad
P_n g=\frac1n\sum_{i=1}^n g(X_i,Y_i).
\]

For \(\rho\ge 0\), let \(B_\rho(x)=\) denote the closed Euclidean ball of radius \(\rho\) centered at \(x\). We also define the \(\rho\)-enlargement of the input space by
\(
\mathcal X_\rho=\{x+u:x\in\mathcal X,\ \|u\|_2\le \rho\}.
\) For \(L\ge 0\), let
\[
\mathcal F_L
=
\left\{
f:\mathcal X_\rho\to[-1,1]:
|f(x)-f(x')|\le L\|x-x'\|_2
\ \text{for all }x,x'\in\mathcal X_\rho
\right\}
\]
be the class of bounded \(L\)-Lipschitz functions. When the perturbation radius is clear from context, we suppress the dependence of \(\mathcal F_L\) on \(\mathcal X_\rho\).

For \(f\in\mathcal F_L\), the clean squared loss and the adversarial squared loss at radius \(\rho\) are $\ell_{0,f}(x,y)=(f(x)-y)^2$ and $\ell_{\rho,f}(x,y)=\sup_{\tilde x\in B_\rho(x)}(f(\tilde x)-y)^2$. When \(\rho=0\), this reduces to the clean loss. The corresponding clean population and empirical risks are

$R_0(f)=P\ell_{0,f}$ and $\widehat R_0(f)=P_n\ell_{0,f}$,
while the robust population risks and empirical risks are
$R_\rho(f)=P\ell_{\rho,f}
\qquad\text{and}\qquad
\widehat R_\rho(f)=P_n\ell_{\rho,f}$.
The robust generalization gap is denoted by
$\operatorname{Gap}_\rho(f)=R_\rho(f)-\widehat R_\rho(f)$.

The clean and robust loss classes induced by \(\mathcal F_L\) are
\[
\mathcal L_0(\mathcal F_L)=\{\ell_{0,f}:f\in\mathcal F_L\},
\qquad
\mathcal L_\rho(\mathcal F_L)=\{\ell_{\rho,f}:f\in\mathcal F_L\}.
\]
When the underlying predictor class is clear, we abbreviate these by \(\mathcal L_0\) and \(\mathcal L_\rho\).

Finally, when \((T,d_T)\) is a metric space and \(\eta>0\), the covering number
$
\mathcal N(\eta,T,d_T)
$
is the smallest integer \(N\) for which there exist
\(t_1,\ldots,t_N\in T\) such that
\(
T\subseteq \bigcup_{j=1}^N
\{t\in T:d_T(t,t_j)\le\eta\}
\). If no such finite cover exists, we set
\(\mathcal N(\eta,T,d_T)=\infty\).  

For a class \(\mathcal G\) of functions
and a probability measure \(Q\), we write $\|g-h\|_{L_2(Q)}
=
\left(Q(g-h)^2\right)^{1/2}$.
Thus \(\mathcal N(\eta,\mathcal G,L_2(Q))\) denotes the covering number of
\(\mathcal G\) under the \(L_2(Q)\)-metric.

We first show that clean overfitting creates a robust train--test gap. Furthermore, we use Rademacher generalization bounds to obtain a lower bound on the complexity of the robust loss class.

\begin{lemma}[Robust generalization gap and overfitting]
\label{thm:robust-noise-floor-gap}
For any \(f\in\mathcal F_L\),
$
\widehat R_\rho(f)
\le
\left(\sqrt{\widehat R_0(f)}+L\rho\right)^2,
$
and hence
$\operatorname{Gap}_\rho(f)
\ge
\sigma^2-\left(\sqrt{\widehat R_0(f)}+L\rho\right)^2$.
\end{lemma}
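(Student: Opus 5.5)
We prove the first inequality pointwise and then average; the second follows from a population lower bound on $R_\rho(f)$ via the noise level $\sigma^2$.

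\textbf{Pointwise bound.} Fix a sample point $(x_i,y_i)$ and any $\tilde x\in B_\rho(x_i)$. Since $f\in\mathcal F_L$ is defined on $\mathcal X_\rho\supseteq B_\rho(x_i)$, the Lipschitz property gives
\[
|f(\tilde x)-y_i|\le |f(x_i)-y_i|+L\rho .
\]
Squaring and taking the supremum over $\tilde x$ yields
\[
\ell_{\rho,f}(x_i,y_i)\le \bigl(a_i+L\rho\bigr)^2, \qquad a_i=|f(x_i)-y_i|.
\]

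\textbf{Averaging.} We have
\[
\widehat R_\rho(f)\le \frac1n\sum_i (a_i+L\rho)^2 = \widehat R_0(f)+2L\rho\,\frac1n\sum_i a_i+L^2\rho^2 .
\]
By Cauchy--Schwarz (equivalently Jensen), $\frac1n\sum_i a_i\le\sqrt{\frac1n\sum_i a_i^2}=\sqrt{\widehat R_0(f)}$. Substituting gives $\widehat R_\rho(f)\le(\sqrt{\widehat R_0(f)}+L\rho)^2$. This is the same as the triangle inequality for the $L_2(P_n)$ norm applied to the vector $(a_i+L\rho)_i$.

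\textbf{Population lower bound.} Since $\tilde x=x\in B_\rho(x)$, we have $\ell_{\rho,f}\ge\ell_{0,f}$ pointwise, so $R_\rho(f)\ge R_0(f)$. Write $\eta(X)=\mathbb E[Y\mid X]$. The bias--variance decomposition gives
\[
R_0(f)=\mathbb E\bigl[(f(X)-\eta(X))^2\bigr]+\mathbb E\bigl[\operatorname{Var}(Y\mid X)\bigr]\ge\sigma^2,
\]
where the cross term vanishes by conditioning on $X$. Combining, $R_\rho(f)\ge\sigma^2$. Subtracting the empirical bound from the averaging step gives the claimed lower bound on $\operatorname{Gap}_\rho(f)$.

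The main point needing care is the averaging step: one must not bound $\frac1n\sum_i a_i$ crudely (for example by $2$), and should instead use Cauchy--Schwarz to obtain the clean square form. The other steps are routine.
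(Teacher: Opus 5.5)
Your proposal is correct and follows the paper's proof essentially step for step: the Lipschitz triangle inequality on $B_\rho(X_i)$, then supremum, squaring and averaging, then Cauchy--Schwarz to get the square form, and finally $R_\rho(f)\ge R_0(f)\ge\sigma^2$ from the squared-loss variance decomposition. The paper only cites that decomposition, whereas you write out the conditional-mean argument explicitly; this adds detail but is not a different route.
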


\begin{proof}
For each \((X_i,Y_i)\in S\) and every \(\tilde X_i\in B_\rho(X_i)\),
$|f(\tilde X_i)-Y_i|
\le
|f(X_i)-Y_i|+|f(\tilde X_i)-f(X_i)|
\le
|f(X_i)-Y_i|+L\rho$.
Taking the supremum over \(\tilde X_i\in B_\rho(X_i)\), squaring, and averaging gives
$\widehat R_\rho(f)
\le
\frac1n\sum_{i=1}^n
\left(|f(X_i)-Y_i|+L\rho\right)^2$. By Cauchy--Schwarz,
$
\frac1n\sum_{i=1}^n |f(X_i)-Y_i|
\le
\sqrt{\widehat R_0(f)}.
$
Therefore
\[
\widehat R_\rho(f)
\le
\widehat R_0(f)+2L\rho\sqrt{\widehat R_0(f)}+L^2\rho^2
=
\left(\sqrt{\widehat R_0(f)}+L\rho\right)^2.
\]
Finally, \(R_\rho(f)\ge R_0(f)\ge\sigma^2\), where the last inequality follows from the squared-loss variance decomposition.  This proves the claim.
\end{proof}

\begin{corollary}[Overfitting implies a robust gap]
\label{cor:below-noise-robust-gap}
If \(0<\varepsilon<\sigma^2\) and
\(
\widehat R_0(f)\le\sigma^2-\varepsilon,
\)
then
\[
\operatorname{Gap}_\rho(f)
\ge
\Gamma_\rho(\varepsilon,L)
\]
where $\Gamma_{\rho}(\epsilon, L):=
\sigma^2-\left(\sqrt{\sigma^2-\varepsilon}+L\rho\right)^2
=
\varepsilon
-
2L\rho\sqrt{\sigma^2-\varepsilon}
-
L^2\rho^2$.
In particular, the lower bound is positive whenever $L\rho<\sigma-\sqrt{\sigma^2-\varepsilon}$.
\end{corollary}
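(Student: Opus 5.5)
The corollary follows from Lemma~\ref{thm:robust-noise-floor-gap} by a monotonicity argument. The plan has three steps: plug in the hypothesis, expand the square, and solve the positivity condition.

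First, Lemma~\ref{thm:robust-noise-floor-gap} gives, for any \(f\in\mathcal F_L\),
\[
\operatorname{Gap}_\rho(f)\ge\sigma^2-\left(\sqrt{\widehat R_0(f)}+L\rho\right)^2 .
\]
The map \(t\mapsto(\sqrt t+L\rho)^2\) is nondecreasing on \([0,\infty)\), since \(L\rho\ge0\). The hypothesis \(\widehat R_0(f)\le\sigma^2-\varepsilon\), with \(\sigma^2-\varepsilon>0\), therefore gives
\[
\left(\sqrt{\widehat R_0(f)}+L\rho\right)^2\le\left(\sqrt{\sigma^2-\varepsilon}+L\rho\right)^2 .
\]
Substituting this into the lemma yields \(\operatorname{Gap}_\rho(f)\ge\Gamma_\rho(\varepsilon,L)\).

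Second, expanding the square,
\[
\left(\sqrt{\sigma^2-\varepsilon}+L\rho\right)^2=\sigma^2-\varepsilon+2L\rho\sqrt{\sigma^2-\varepsilon}+L^2\rho^2 .
\]
Subtracting this from \(\sigma^2\) gives the stated closed form \(\varepsilon-2L\rho\sqrt{\sigma^2-\varepsilon}-L^2\rho^2\).

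Third, for the positivity claim, note that \(\Gamma_\rho>0\) if and only if \((\sqrt{\sigma^2-\varepsilon}+L\rho)^2<\sigma^2\). Both \(\sqrt{\sigma^2-\varepsilon}+L\rho\) and \(\sigma\) are nonnegative, so taking square roots preserves the inequality. The condition becomes \(\sqrt{\sigma^2-\varepsilon}+L\rho<\sigma\), which is exactly \(L\rho<\sigma-\sqrt{\sigma^2-\varepsilon}\). The right-hand side is positive because \(\varepsilon>0\), so the condition is nonvacuous.

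No step is a real obstacle. The only points needing care are that monotonicity uses \(L\rho\ge0\), and that the square-root step uses nonnegativity of both sides.
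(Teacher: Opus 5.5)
Your proposal is correct and follows the paper's route: the paper's one-line proof notes that the hypothesis gives $\sqrt{\widehat R_0(f)}\le\sqrt{\sigma^2-\varepsilon}$ and substitutes this into Lemma~\ref{thm:robust-noise-floor-gap}, which is exactly your monotonicity step. The paper leaves the expanded closed form and the positivity condition unchecked, and your verification of both is correct. This includes the square-root step, which relies on both sides being nonnegative.
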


\begin{proof}
The assumption gives $\sqrt{\widehat R_0(f)} \le \sqrt{\sigma^2-\varepsilon}$; substituting in Theorem~\ref{thm:robust-noise-floor-gap} proves the result.
\end{proof}

We now convert the robust gap into a lower bound on the complexity of the robust loss class.  Since \(Y\in[-1,1]\), every robust squared loss satisfies
$
0\le \ell_{\rho,f}\le 4.
$

\begin{definition}[Rademacher complexities]
Let \(S=\{Z_i\}_{i=1}^n\) be a sample, and let \(\xi_1,\dots,\xi_n\) be independent Rademacher random variables, independent of \(S\), with
\(
\mathbb P(\xi_i=1)=\mathbb P(\xi_i=-1)=\tfrac12.
\)

For a class \(\mathcal G\) of real-valued functions on \(\mathcal X\times\mathcal Y\), the empirical and expected Rademacher complexity is
\[
\widehat{\mathfrak R}_S(\mathcal G)
=
\mathbb E_\xi
\left[
\sup_{g\in\mathcal G}
\frac1n\sum_{i=1}^n \xi_i g(Z_i)
\right]
\qquad \text{and} \qquad 
\mathfrak R_n(\mathcal G)
=
\mathbb E_S\widehat{\mathfrak R}_S(\mathcal G).
\]
\end{definition}

\begin{lemma}[Complexity forced by a robust gap]
\label{thm:rademacher-lower-bound} With probability at least \(1-\delta\) over \(S\), simultaneously for all \(f\in\mathcal F_L\),
\[
\mathfrak R_n(\mathcal L_\rho)
\ge
\frac12
\left[
\sigma^2-\left(\sqrt{\widehat R_0(f)}+L\rho\right)^2
-
4\sqrt{\frac{\log(1/\delta)}{2n}}
\right].
\]
Consequently, if \(\widehat R_0(f)\le\sigma^2-\varepsilon\), then
\[
\mathfrak R_n(\mathcal L_\rho)
\ge
\frac12
\left[
\Gamma_\rho(\varepsilon,L)
-
4\sqrt{\frac{\log(1/\delta)}{2n}}
\right].
\]
\end{lemma}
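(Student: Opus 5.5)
The plan is to combine the standard one-sided uniform deviation bound for bounded loss classes with the pointwise gap lower bound of Lemma~\ref{thm:robust-noise-floor-gap}. Since every $\ell_{\rho,f}\in\mathcal L_\rho$ takes values in $[0,4]$, the usual symmetrization-plus-McDiarmid argument will give, with probability at least $1-\delta$ over $S$,
\[
\sup_{f\in\mathcal F_L}\bigl(R_\rho(f)-\widehat R_\rho(f)\bigr)
\le
2\mathfrak R_n(\mathcal L_\rho)+4\sqrt{\frac{\log(1/\delta)}{2n}} .
\]
I will derive this in two steps. First, replacing one sample point changes the supremum $\Phi(S)=\sup_f (P-P_n)\ell_{\rho,f}$ by at most $4/n$, so McDiarmid yields $\Phi(S)\le\mathbb E\Phi+4\sqrt{\log(1/\delta)/(2n)}$. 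Second, a ghost-sample symmetrization gives $\mathbb E\Phi\le 2\mathfrak R_n(\mathcal L_\rho)$. I only need this one-sided bound, and it does not depend on $f$.

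On this event, the inequality holds simultaneously for every $f\in\mathcal F_L$, so for each such $f$ we get $\operatorname{Gap}_\rho(f)\le 2\mathfrak R_n(\mathcal L_\rho)+4\sqrt{\log(1/\delta)/(2n)}$. Lemma~\ref{thm:robust-noise-floor-gap} holds deterministically for every $f$, giving $\operatorname{Gap}_\rho(f)\ge\sigma^2-(\sqrt{\widehat R_0(f)}+L\rho)^2$. Chaining the two inequalities and dividing by $2$ produces the first display of the statement. Because the high-probability event was chosen independently of $f$, the conclusion holds uniformly over $\mathcal F_L$, including any data-dependent choice such as an interpolant.

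For the consequence, I will assume $\widehat R_0(f)\le\sigma^2-\varepsilon$. Then $(\sqrt{\widehat R_0(f)}+L\rho)^2\le(\sqrt{\sigma^2-\varepsilon}+L\rho)^2$, which is exactly the substitution used in Corollary~\ref{cor:below-noise-robust-gap}. This replaces the bracket with $\Gamma_\rho(\varepsilon,L)$.

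The main obstacle is getting the constants and the measurability right in the uniform deviation bound. One must note that the supremum over $\tilde x\in B_\rho(x)$ defining $\ell_{\rho,f}$ is harmless: each loss is still a fixed function of $z$ with range in $[0,4]$, and the continuity of $f$ makes the supremum over the uncountable class measurable, for instance by restricting to a countable dense subclass of $\mathcal F_L$. It is also essential that the bounded-differences constant is $4/n$; this is what produces the coefficient $4$ in front of $\sqrt{\log(1/\delta)/(2n)}$.
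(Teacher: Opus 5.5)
Your proposal is correct and takes essentially the same route as the paper: a uniform one-sided deviation bound $Pg-P_ng\le 2\mathfrak R_n(\mathcal L_\rho)+4\sqrt{\log(1/\delta)/(2n)}$ for the $[0,4]$-bounded class $\mathcal L_\rho$, chained with the deterministic gap bound of Lemma~\ref{thm:robust-noise-floor-gap}, and then the substitution from Corollary~\ref{cor:below-noise-robust-gap}. The only difference is that you derive the deviation bound explicitly via McDiarmid (with bounded-differences constant $4/n$) and ghost-sample symmetrization, whereas the paper cites it as standard here and carries out that derivation only in the appendix, for the localized analogue.
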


\begin{proof}
By the standard Rademacher generalization bound for classes bounded in \([0,4]\), with probability at least \(1-\delta\), every \(g\in\mathcal L_\rho\) satisfies
\[
Pg-P_n g
\le
2\mathfrak R_n(\mathcal L_\rho)
+
4\sqrt{\frac{\log(1/\delta)}{2n}}.
\]
Applying this to \(g=\ell_{\rho,f}\) gives, uniformly over \(f\in\mathcal F_L\),
\[
\operatorname{Gap}_\rho(f)
\le
2\mathfrak R_n(\mathcal L_\rho)
+
4\sqrt{\frac{\log(1/\delta)}{2n}}.
\]
Combining this upper bound with Lemma~\ref{thm:robust-noise-floor-gap} yields
\[
\sigma^2-\left(\sqrt{\widehat R_0(f)}+L\rho\right)^2
\le
2\mathfrak R_n(\mathcal L_\rho)
+
4\sqrt{\frac{\log(1/\delta)}{2n}}.
\]
Rearranging proves the first inequality.  The below-noise version follows from Corollary~\ref{cor:below-noise-robust-gap}.
\end{proof}

%

We now compare robust and
clean squared-loss classes.  

\begin{lemma}[Extremal envelopes]
\label{lem:extremal-envelopes}
For \(f\in\mathcal F_L\), define
$f_\rho^+(x)=\sup_{\tilde x\in B_\rho(x)} f(\tilde x)$ and
$f_\rho^-(x)=\inf_{\tilde x\in B_\rho(x)} f(\tilde x)$.
Both \(f_\rho^+\) and \(f_\rho^-\) are bounded in \([-1,1]\) and
\(L\)-Lipschitz on \(\mathcal X\).  Moreover, for every
\((x,y)\in\mathcal X\times[-1,1]\),
$\ell_{\rho,f}(x,y)
= \max\left\{
(f_\rho^+(x)-y)^2,\,
(f_\rho^-(x)-y)^2
\right\}$. See~ Appendix~\ref{sec:appendix}-\ref{lem:extremal-envelopes-appendix} for the proof.
\end{lemma}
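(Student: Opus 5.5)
The proof has three parts, each handled directly: boundedness, the Lipschitz property of the envelopes, and the identity for $\ell_{\rho,f}$.

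\emph{Boundedness.} For $x\in\mathcal X$, the ball $B_\rho(x)$ lies inside $\mathcal X_\rho$, so $f$ is defined on it and takes values in $[-1,1]$. The supremum and infimum of such values therefore also lie in $[-1,1]$.

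\emph{Lipschitz property.} Fix $x,x'\in\mathcal X$ and any $\tilde x\in B_\rho(x)$. The translate $\tilde x' = \tilde x + (x'-x)$ lies in $B_\rho(x')$, since $\|\tilde x'-x'\|_2=\|\tilde x-x\|_2\le\rho$. It also lies in $\mathcal X_\rho$, because $\mathcal X_\rho$ is defined via the same offset $u=\tilde x-x$. The Lipschitz condition of $f$ on $\mathcal X_\rho$ then gives
\[
f(\tilde x)\le f(\tilde x')+L\|x-x'\|_2\le f_\rho^+(x')+L\|x-x'\|_2 .
\]
Taking the supremum over $\tilde x$ yields $f_\rho^+(x)-f_\rho^+(x')\le L\|x-x'\|_2$. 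Swapping $x$ and $x'$ gives the reverse inequality. The same argument with infima, or equivalently applying the result to $-f$, handles $f_\rho^-$. This translation-coupling step is the only point needing care: the Lipschitz property must be used only between points of $\mathcal X_\rho$, and the translate guarantees this.

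\emph{The identity.} The map $t\mapsto (t-y)^2$ is convex in $t$. Hence its supremum over the set $A=\{f(\tilde x):\tilde x\in B_\rho(x)\}\subseteq[\inf A,\sup A]$ is attained at an endpoint of this interval. More concretely, for every $t\in A$ we have $f_\rho^-(x)\le t\le f_\rho^+(x)$, and convexity gives
\[
(t-y)^2\le\max\{(f_\rho^+(x)-y)^2,(f_\rho^-(x)-y)^2\},
\]
so $\ell_{\rho,f}(x,y)\le\max\{\cdots\}$. For the reverse inequality, note that $f$ is continuous (being Lipschitz) and $B_\rho(x)$ is compact. Therefore $f_\rho^\pm(x)$ are attained at points of the ball, and each term of the maximum is at most $\ell_{\rho,f}(x,y)$. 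Alternatively, without attainment, one can take a sequence $\tilde x_k$ with $f(\tilde x_k)\to f_\rho^+(x)$ and use continuity of the square. This gives equality.
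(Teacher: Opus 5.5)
Your proposal is correct and follows essentially the same route as the paper. For the Lipschitz property, both use the translation coupling $f(x+u)\le f(x'+u)+L\|x-x'\|_2$ with $\|u\|_2\le\rho$, and handle $f_\rho^-$ via $f_\rho^-=-(-f)_\rho^+$. For the identity, both use the containment of $\{f(\tilde x):\tilde x\in B_\rho(x)\}$ in $[f_\rho^-(x),f_\rho^+(x)]$, plus approximation of the endpoints. Your observation that the endpoints are actually attained, by continuity of $f$ and compactness of the closed ball, is a harmless strengthening of the paper's sequence argument.
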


\begin{corollary}[Squared-loss contraction]
\label{cor:squared-loss-contraction}
Let \(y_1,\ldots,y_n\in[-1,1]\), and let \(A\subseteq[-1,1]^n\).  Then
\[
\widehat{\mathfrak R}
\left(
\left\{
\bigl((a_i-y_i)^2\bigr)_{i=1}^n:a\in A
\right\}
\right)
\le
4\,\widehat{\mathfrak R}(A).
\]
\end{corollary}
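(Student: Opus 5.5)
The plan is to apply the Ledoux--Talagrand contraction principle coordinatewise. For each fixed \(i\), define \(\phi_i(t)=(t-y_i)^2\) on \([-1,1]\). Since \(y_i\in[-1,1]\), for \(t,t'\in[-1,1]\) we have
\[
|\phi_i(t)-\phi_i(t')|=|t-t'|\,|t+t'-2y_i|\le 4|t-t'|,
\]
because \(|t+t'-2y_i|\le |t-y_i|+|t'-y_i|\le 2+2\). So each \(\phi_i\) is \(4\)-Lipschitz on the range of the coordinates of \(A\).

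The contraction lemma, in the form allowing a different Lipschitz function in each coordinate, states that for \(\lambda\)-Lipschitz \(\phi_i\),
\[
\mathbb E_\xi\sup_{a\in A}\sum_i \xi_i\phi_i(a_i)\le \lambda\,\mathbb E_\xi\sup_{a\in A}\sum_i\xi_i a_i .
\]
Applying it with \(\lambda=4\) and dividing by \(n\) gives the claim. The standard statement asks for \(\phi_i(0)=0\) only when absolute values appear inside the supremum. The paper's \(\widehat{\mathfrak R}\) has no absolute value, so no centering is needed. Even so, I would note that subtracting the constant \(\phi_i(0)=y_i^2\) does not change \(\mathbb E_\xi\sup\), since \(\mathbb E\xi_i y_i^2=0\).

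If a self-contained argument is wanted, I would prove the contraction step directly by induction on coordinates. Condition on \(\xi_2,\dots,\xi_n\) and write \(G(a)=\sum_{i\ge2}\xi_i\phi_i(a_i)\). Averaging over \(\xi_1\) gives
\[
\tfrac12\sup_{a,a'}\bigl[\phi_1(a_1)-\phi_1(a'_1)+G(a)+G(a')\bigr].
\]
Bound \(\phi_1(a_1)-\phi_1(a'_1)\le 4|a_1-a'_1|\). By the symmetry of swapping \(a\) and \(a'\), this can be replaced by \(4(a_1-a'_1)\), which equals \(\mathbb E_{\xi_1}\sup_a[4\xi_1a_1+G(a)]\). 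Iterating over the coordinates replaces every \(\phi_i\) by \(4\,\mathrm{id}\).

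There is one caveat that I would flag as the main obstacle. In a general Talagrand-type bound a factor \(2\) can appear, but the version without absolute values gives the constant \(\lambda\) exactly, which yields the stated factor \(4\). The only real care is restricting the Lipschitz estimate to \([-1,1]\), and this holds since \(A\subseteq[-1,1]^n\).
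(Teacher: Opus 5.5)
Your proof is correct and is essentially the paper's argument: both apply a coordinatewise contraction lemma to the maps $t\mapsto (t-y_i)^2$, which are $4$-Lipschitz on $[-1,1]$. The differences are cosmetic: the paper subtracts $y_i^2$ and extends $\phi_i$ to a $4$-Lipschitz function on $\mathbb{R}$ to meet the hypotheses of its Lemma~\ref{lem:local-contraction}, whereas you note that the version without absolute values needs neither step, and you also sketch the standard one-coordinate-at-a-time proof of the contraction itself.
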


\begin{proof}
Set \(\phi_i(t)=(t-y_i)^2-y_i^2\).  Then \(\phi_i(0)=0\), and
\(\phi_i\) is \(4\)-Lipschitz on \([-1,1]\).  Extend \(\phi_i\) to a
\(4\)-Lipschitz function on \(\mathbb R\). We use the local contraction Lemma~\ref{lem:local-contraction} (Appendix \ref{sec:appendix})
gives
\[
\widehat{\mathfrak R}
\left(
\left\{
\bigl((a_i-y_i)^2-y_i^2\bigr)_{i=1}^n:a\in A
\right\}
\right)
\le
4\,\widehat{\mathfrak R}(A).
\]
Adding the fixed vector \((y_i^2)_{i=1}^n\) does not change Rademacher
complexity.
\end{proof}

\begin{lemma}[Coordinatewise maxima]
\label{lem:coordinatewise-maxima}
Let \(A,B\subseteq\mathbb R^n\), and define
$
A\vee B
=
\{(a_1\vee b_1,\ldots,a_n\vee b_n):a\in A,\ b\in B\}$.
Then
$
\widehat{\mathfrak R}(A\vee B)
\le
\widehat{\mathfrak R}(A)+\widehat{\mathfrak R}(B)$.
\end{lemma}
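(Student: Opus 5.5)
The plan is to write the coordinatewise maximum as an average of a sum and an absolute difference,
\[
a_i\vee b_i=\tfrac12\bigl(a_i+b_i\bigr)+\tfrac12\lvert a_i-b_i\rvert ,
\]
and to bound the Rademacher complexity of each piece separately. Since the supremum of a sum is at most the sum of the suprema, and the pair $(a,b)$ ranges over the product $A\times B$,
\[
\widehat{\mathfrak R}(A\vee B)\le \tfrac12\,\mathbb E_\xi\sup_{a\in A,b\in B}\sum_i\xi_i(a_i+b_i)\Big/n+\tfrac12\,\mathbb E_\xi\sup_{a\in A,b\in B}\sum_i\xi_i\lvert a_i-b_i\rvert\Big/n .
\]
The supremum in the first term decouples over $A$ and $B$, so the first term equals $\tfrac12\bigl(\widehat{\mathfrak R}(A)+\widehat{\mathfrak R}(B)\bigr)$.

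For the second term I will apply the local contraction Lemma~\ref{lem:local-contraction} with $\phi_i(t)=\lvert t\rvert$. This map is $1$-Lipschitz and satisfies $\phi_i(0)=0$, so the contraction is applied to the set $A-B=\{a-b:a\in A,\ b\in B\}$ and gives
\[
\widehat{\mathfrak R}\bigl(\{(\lvert a_i-b_i\rvert)_i\}\bigr)\le \widehat{\mathfrak R}(A-B).
\]
Because $\xi$ and $-\xi$ have the same distribution, $\widehat{\mathfrak R}(-B)=\widehat{\mathfrak R}(B)$. Decoupling the supremum again then gives $\widehat{\mathfrak R}(A-B)\le\widehat{\mathfrak R}(A)+\widehat{\mathfrak R}(B)$. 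Adding the two halves yields exactly $\widehat{\mathfrak R}(A)+\widehat{\mathfrak R}(B)$.

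The main point of care is the contraction step. The standard Ledoux--Talagrand inequality sometimes carries a factor $2$ when absolute values are involved. That factor appears only in the version with $\sup\lvert\cdot\rvert$ inside the expectation, not in the signed version used here. I will therefore check that Lemma~\ref{lem:local-contraction} is stated in the signed form, with constant equal to the Lipschitz constant, as it was used in Corollary~\ref{cor:squared-loss-contraction}.

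A minor further point: if $A$ or $B$ is unbounded, both sides may be infinite, and the inequality then holds trivially. Measurability is also not an issue, because $\xi$ takes only finitely many values.
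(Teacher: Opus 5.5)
Your proposal is correct and follows the paper's proof essentially step for step: the identity $u\vee v=\tfrac12(u+v+\lvert u-v\rvert)$, decoupling the sum term over $A\times B$, applying Lemma~\ref{lem:local-contraction} with $t\mapsto\lvert t\rvert$ to $\{a-b:a\in A,\ b\in B\}$, and using $\widehat{\mathfrak R}(-B)=\widehat{\mathfrak R}(B)$. Your check on the contraction constant is also right: the paper's lemma is the signed version, with constant equal to the Lipschitz constant, so no factor $2$ enters.
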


\begin{proof}
Using \(u\vee v=(u+v+|u-v|)/2\),
\[
\widehat{\mathfrak R}(A\vee B)
\le
\frac12\widehat{\mathfrak R}(A)
+
\frac12\widehat{\mathfrak R}(B)
+
\frac12
\mathbb E_\xi
\left[
\sup_{a\in A,b\in B}
\frac1n\sum_{i=1}^n \xi_i |a_i-b_i|
\right].
\]
Apply Lemma~\ref{lem:local-contraction} to the class
\(\{a-b:a\in A,\ b\in B\}\) and to \(t\mapsto |t|\).  The last expectation is
at most
\[
\mathbb E_\xi
\left[
\sup_{a\in A,b\in B}
\frac1n\sum_{i=1}^n \xi_i(a_i-b_i)
\right]
\le
\widehat{\mathfrak R}(A)+\widehat{\mathfrak R}(-B).
\]
Since \(\widehat{\mathfrak R}(-B)=\widehat{\mathfrak R}(B)\), the claim
follows.
\end{proof}

\begin{theorem}[Local robust-to-clean reduction]
\label{prop:local-robust-to-clean-reduction}
Fix a sample \(S=\{(X_i,Y_i)\}_{i=1}^n\).  Let
\[
\mathcal F_L(\mathcal X)
=
\{h:\mathcal X\to[-1,1]: |h(x)-h(x')|\le L\|x-x'\|_2
\text{ for all }x,x'\in\mathcal X\}.
\]
Then
$\widehat{\mathfrak R}_S\bigl(\mathcal L_\rho(\mathcal F_L)\bigr)
\le
2\,\widehat{\mathfrak R}_S\bigl(\mathcal L_0(\mathcal F_L(\mathcal X))\bigr)
\le
8\,\widehat{\mathfrak R}_{X_1^n}\bigl(\mathcal F_L(\mathcal X)\bigr)$.
\end{theorem}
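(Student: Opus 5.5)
The argument is a direct composition of Lemma~\ref{lem:extremal-envelopes}, Lemma~\ref{lem:coordinatewise-maxima}, and Corollary~\ref{cor:squared-loss-contraction}, all applied with the sample \(S\) held fixed.

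First, write each robust loss as a coordinatewise maximum. By Lemma~\ref{lem:extremal-envelopes}, for every \(f\in\mathcal F_L\) the envelopes \(f_\rho^+\) and \(f_\rho^-\), restricted to \(\mathcal X\), lie in \(\mathcal F_L(\mathcal X)\). Moreover, \(\ell_{\rho,f}(X_i,Y_i)=\max\{(f_\rho^+(X_i)-Y_i)^2,(f_\rho^-(X_i)-Y_i)^2\}\). Set
\[
A=\bigl\{\bigl((h(X_i)-Y_i)^2\bigr)_{i=1}^n : h\in\mathcal F_L(\mathcal X)\bigr\}\subseteq\R^n .
\]
Then every vector \((\ell_{\rho,f}(Z_i))_{i=1}^n\) lies in \(A\vee A\), because we may take \(a\) from \(h=f_\rho^+\) and \(b\) from \(h=f_\rho^-\). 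The supremum over the larger set \(A\vee A\) dominates, and \(\widehat{\mathfrak R}_S(\mathcal L_0(\mathcal F_L(\mathcal X)))=\widehat{\mathfrak R}(A)\). Lemma~\ref{lem:coordinatewise-maxima} therefore gives
\[
\widehat{\mathfrak R}_S(\mathcal L_\rho(\mathcal F_L))\le\widehat{\mathfrak R}(A\vee A)\le 2\,\widehat{\mathfrak R}(A),
\]
which is the first inequality.

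For the second inequality, apply Corollary~\ref{cor:squared-loss-contraction} with \(y_i=Y_i\in[-1,1]\) and
\[
A'=\{(h(X_i))_{i=1}^n: h\in\mathcal F_L(\mathcal X)\}\subseteq[-1,1]^n .
\]
This gives \(\widehat{\mathfrak R}(A)\le 4\,\widehat{\mathfrak R}(A')=4\,\widehat{\mathfrak R}_{X_1^n}(\mathcal F_L(\mathcal X))\), and multiplying by \(2\) yields the constant \(8\).

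The main point needing care is the envelope step. We must check that each \(f_\rho^\pm\) is a legitimate member of \(\mathcal F_L(\mathcal X)\), that is, bounded in \([-1,1]\) and \(L\)-Lipschitz on \(\mathcal X\). This is exactly what Lemma~\ref{lem:extremal-envelopes} supplies, and it is the reason \(f\) is defined on the enlargement \(\mathcal X_\rho\): the sup and inf then range over points where \(f\) is defined. We must also check that Lemma~\ref{lem:coordinatewise-maxima} tolerates using the same set \(A\) twice. It does, since the lemma allows independent choices of \(a\in A\) and \(b\in B\), and we only need the inclusion of the robust loss vectors into \(A\vee A\), not equality.
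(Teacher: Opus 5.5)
Your proposal is correct and follows the paper's proof essentially verbatim. You embed the robust-loss vectors into \(A\vee A\) via the extremal envelopes, apply Lemma~\ref{lem:coordinatewise-maxima} to get the factor \(2\), and then apply Corollary~\ref{cor:squared-loss-contraction} to get the factor \(4\); your remarks on why \(f_\rho^\pm\) lie in \(\mathcal F_L(\mathcal X)\) and why reusing the same set \(A\) twice is permitted are both accurate.
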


\begin{proof}
By Lemma~\ref{lem:extremal-envelopes}, every robust-loss vector $\bigl(\ell_{\rho,f}(X_i,Y_i)\bigr)_{i=1}^n$
is the coordinatewise maximum of two clean squared-loss vectors induced by
\(f_\rho^+\) and \(f_\rho^-\).  Since both envelopes belong to
\(\mathcal F_L(\mathcal X)\), the set of robust-loss vectors is contained in
\(A\vee A\), where
\[
A
=
\left\{
\bigl((h(X_i)-Y_i)^2\bigr)_{i=1}^n:
h\in\mathcal F_L(\mathcal X)
\right\}.
\]
By monotonicity of Rademacher complexity and
Lemma~\ref{lem:coordinatewise-maxima},
\[
\widehat{\mathfrak R}_S\bigl(\mathcal L_\rho(\mathcal F_L)\bigr)
\le
\widehat{\mathfrak R}(A\vee A)
\le
2\,\widehat{\mathfrak R}(A)
=
2\,\widehat{\mathfrak R}_S\bigl(\mathcal L_0(\mathcal F_L(\mathcal X))\bigr).
\]
The second inequality follows from Corollary~\ref{cor:squared-loss-contraction},
because \(h(X_i)\in[-1,1]\) and \(Y_i\in[-1,1]\):
\[
\widehat{\mathfrak R}_S\bigl(\mathcal L_0(\mathcal F_L(\mathcal X))\bigr)
\le
4\,\widehat{\mathfrak R}_{X_1^n}\bigl(\mathcal F_L(\mathcal X)\bigr).
\]
Combining the two bounds proves the claim. Using Lemma 3.8 in \citep{wu2023law}, L = \(\Omega(n^{1/d})\) 
\end{proof}
\section{Local Rademacher complexity and Robust Generalization}
\label{sec:localized-rademacher}

We now localize the robust loss class by its true \(L_2(P)\)-radius.  For
\(r\ge0\), define $\mathcal L_\rho(r) = \left\{
\ell_{\rho,f}: f\in\mathcal F_L,\;
P\ell_{\rho,f}^2\le r
\right\}$.

Throughout this section, let $\X\subset \R^d$ be nonempty, compact, and connected, with
$D:=\diam(\X)<\infty$. Because the robust loss evaluates \(f\) on the enlarged set \(\mathcal X_\rho\), while samples lie in \(\mathcal X\), we first record a simple extension lemma. It shows that any Lipschitz function defined on \(\mathcal X\) can be extended to \(\mathcal X_\rho\) without increasing its Lipschitz constant or leaving the range \([-1,1]\). This allows us to treat covers of restrictions to \(\mathcal X\) as covers induced by functions in the original class on \(\mathcal X_\rho\).

For purposes of bounding the empirical metric entropy on samples from \(\mathcal X\), it is enough to construct covers of Lipschitz functions on \(\mathcal X\). Any such approximant may be extended back to \(\mathcal X_\rho\) while remaining in \(\mathcal B_L\). This can be proved using McShane's Theorem (Appendix \ref{sec:appendix}).We next prove a covering bound for bounded Lipschitz functions on \(\mathcal X\).

\begin{theorem}
[Metric entropy of bounded Lipschitz predictors]
\label{lem:lipschitz-predictor-entropy}Let $\mathcal F_L(\mathcal X)$ be as defined above. There is a constant \(c_d>0\), depending only on \(d\), such that for every
\(0<\eta\le1\),
\[
\log \mathcal N
\left(
\eta,\mathcal F_L(\mathcal X),\|\cdot\|_\infty
\right)
\le
c_d
\left(
\frac{1+LD}{\eta}
\right)^d .
\]
\end{theorem}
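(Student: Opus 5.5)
We use the classical Kolmogorov--Tikhomirov argument: build an explicit finite family of functions on a grid and count it, using connectedness of \(\mathcal X\) so that the count is governed by a spanning tree of a graph. Fix \(0<\eta\le1\) and set \(\delta=\eta/(4L)\) if \(L>0\); if \(L=0\) the class is the constants, covered by \(O(1/\eta)\) functions, and the bound is trivial. Let \(\{x_1,\ldots,x_N\}\subseteq\mathcal X\) be a maximal \(\delta\)-separated subset, so it is a \(\delta\)-net of \(\mathcal X\). Since \(\mathcal X\subseteq B_D(x_1)\), a volume comparison of disjoint balls of radius \(\delta/2\) gives
\[
N\le \Bigl(1+\tfrac{2D}{\delta}\Bigr)^d \le C_d\Bigl(\tfrac{1+LD}{\eta}\Bigr)^d .
\]

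For each \(h\in\mathcal F_L(\mathcal X)\) we record a discrete code: the vector \(q_j=\eta/2\cdot\lfloor 2h(x_j)/\eta\rfloor\) of values rounded to the grid \((\eta/2)\mathbb Z\cap[-1-\eta,1]\). Two functions with the same code differ by at most \(\eta/2\) at every net point. At a general \(x\), pick \(x_j\) with \(\|x-x_j\|\le\delta\); then the difference is at most \(\eta/2+2L\delta=\eta\). Hence choosing one representative \(h\) per realized code gives an \(\eta\)-cover in \(\|\cdot\|_\infty\), possibly after halving constants, since such covers may use elements of the class. It therefore suffices to bound the number of realizable codes by \(\exp(c_d(1+LD)^d\eta^{-d})\).

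Counting the codes is the main obstacle. The naive count \((4/\eta+3)^N\) carries an extra \(\log(1/\eta)\) factor in the exponent. To remove it, form the graph \(G\) on the net points with an edge when \(\|x_j-x_k\|\le 3\delta\). This graph is connected because \(\mathcal X\) is connected. If the vertices split into two nonempty parts with no edge between them, the closed \(\delta\)-neighbourhoods of the two parts, intersected with \(\mathcal X\), would be disjoint closed sets covering \(\mathcal X\), which is a contradiction. Take a spanning tree of \(G\) and order its vertices by breadth-first search. Along a tree edge, \(|h(x_j)-h(x_k)|\le 3L\delta=3\eta/4\), so given the parent's code the child's code takes at most a constant number of values (about 5). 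The root takes \(O(1/\eta)\) values. The number of codes is therefore at most
\[
O(1/\eta)\cdot 5^{N-1},
\]
so its logarithm is at most \(\log(C/\eta)+N\log 5\).

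Finally, \(\log(C/\eta)\le C'\eta^{-d}\le C'((1+LD)/\eta)^d\) for \(\eta\le1\). Together with the bound on \(N\), this yields
\[
\log\mathcal N\le c_d\Bigl(\tfrac{1+LD}{\eta}\Bigr)^d .
\]
The delicate points are the connectivity of \(G\), which is exactly where the hypothesis that \(\mathcal X\) is connected enters, and tracking constants so that the cover radius comes out as \(\eta\). The McShane extension remark is not needed here, since the statement concerns \(\mathcal F_L(\mathcal X)\) itself.
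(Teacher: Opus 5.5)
Your proposal is correct and follows essentially the same route as the paper: a maximal $\delta$-separated net, a graph on the net points that is connected because $\mathcal X$ is connected, a spanning tree, and quantized net values in which the root has $O(1/\eta)$ choices and each child has at most five choices given its parent. The only differences are in the constants (your $\delta=\eta/(4L)$, floor rounding to an $\eta/2$-grid and edge threshold $3\delta$, versus the paper's $\delta=\eta/(8L)$, nearest rounding to an $\eta/4$-grid and edges between intersecting closed $\delta$-balls). These differences do not change the argument or the final bound.
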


\begin{proof}
The case \(L=0\) is immediate.  Assume \(L>0\) and set
\(\delta=\eta/(8L)\).  Let \(x_1,\ldots,x_m\) be a maximal
\(\delta\)-separated subset of \(\mathcal X\).  Then it is a \(\delta\)-net,
and the standard packing bound gives
\[
m\le \left(1+\frac{2D}{\delta}\right)^d
=
\left(1+\frac{16LD}{\eta}\right)^d .
\]

Connect \(i\) and \(j\) if
\(\overline B(x_i,\delta)\cap\overline B(x_j,\delta)\neq\varnothing\).
This graph is connected; otherwise the corresponding unions of
\(\delta\)-balls over two graph components would separate the connected set
\(\mathcal X\).  Fix a spanning tree rooted at an arbitrary vertex.

Let \(h=\eta/4\) and $G_\eta=h\mathbb Z\cap[-1-h/2,1+h/2]$.
Choose \(\pi_\eta(t)\in G_\eta\) with
$|t-\pi_\eta(t)|\le h/2=\eta/8$, $t\in[-1,1]$.
Then \(|G_\eta|\le 2+8/\eta\).  For \(f\in\mathcal F_L(\mathcal X)\), define
its quantized net-value vector by
\[
q(f)=(q_1(f),\ldots,q_m(f)),
\qquad
q_j(f)=\pi_\eta(f(x_j)).
\]
The root coordinate has at most \(2+8/\eta\) choices.  If \(p(j)\) is the
parent of \(j\) in the spanning tree, then
\(\|x_j-x_{p(j)}\|_2\le 2\delta\), and hence $|f(x_j)-f(x_{p(j)})|\le 2L\delta=\eta/4$.
Therefore $|q_j(f)-q_{p(j)}(f)| \le \eta/8+\eta/4+\eta/8 = \eta/2$.
Since the grid spacing is \(\eta/4\), each non-root coordinate has at most five
possible values once its parent value is fixed.  Thus the number of admissible
quantized net-value vectors is at most $\left(2+\frac{8}{\eta}\right)5^{m-1}$.

If \(f,g\in\mathcal F_L(\mathcal X)\) have the same quantized net-value vector,
then for any \(x\in\mathcal X\), choosing \(x_j\) with
\(\|x-x_j\|_2\le\delta\) gives
$ |f(x)-g(x)| \le L\delta+\eta/8+\eta/8+L\delta = \eta/2$.
Hence each nonempty fiber of the quantization map has
\(\|\cdot\|_\infty\)-diameter at most \(\eta\).  Taking one representative from
each nonempty fiber yields an \(\eta\)-cover, so
\[
\log \mathcal N
\left(
\eta,\mathcal F_L(\mathcal X),\|\cdot\|_\infty
\right)
\le
\log\left(2+\frac{8}{\eta}\right)
+
(\log 5)\left(1+\frac{16LD}{\eta}\right)^d .
\]
Let \(A=(1+LD)/\eta\).  Since \(0<\eta\le1\), \(A\ge1\), and
\[
\log\left(2+\frac{8}{\eta}\right)
\le
\frac{10}{\eta}
\le
10A^d .
\]
Also, $1+\frac{16LD}{\eta} \le 17A$
Hence
$
\log \mathcal N
\left(
\eta,\mathcal F_L(\mathcal X),\|\cdot\|_\infty
\right)
\le
\bigl(10+17^d\log 5\bigr)A^d$.
Absorbing the constant into \(c_d\) gives
\[
\log \mathcal N
\left(
\eta,\mathcal F_L(\mathcal X),\|\cdot\|_\infty
\right)
\le
c_d
\left(\frac{1+LD}{\eta}\right)^d .
\]
\end{proof}


\begin{lemma}[Metric entropy of robust squared losses]
\label{lem:robust-loss-entropy}
There is a constant \(C_d>0\), depending only on \(d\), such that for every
\(0<\eta\le4\),
\[
\log \mathcal N
\left(
\eta,\mathcal L_\rho(\mathcal F_L),\|\cdot\|_\infty
\right)
\le
C_d
\left(
\frac{1+LD}{\eta}
\right)^d .
\]
\end{lemma}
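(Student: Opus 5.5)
The plan is to reduce the robust-loss cover to a cover of the predictor class on \(\mathcal X\) via the extremal envelopes of Lemma~\ref{lem:extremal-envelopes}, and then invoke Theorem~\ref{lem:lipschitz-predictor-entropy}. Here \(\|\cdot\|_\infty\) on \(\mathcal L_\rho(\mathcal F_L)\) means the sup norm over \(\mathcal X\times[-1,1]\), which is the domain on which the losses are evaluated.

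\textbf{Step 1 (pointwise loss control).} Fix \(f,g\in\mathcal F_L\). By Lemma~\ref{lem:extremal-envelopes}, \(\ell_{\rho,f}(x,y)=\max\{(f_\rho^+(x)-y)^2,(f_\rho^-(x)-y)^2\}\), and similarly for \(g\). Two elementary facts then apply. First, \(|\max\{a,b\}-\max\{a',b'\}|\le\max\{|a-a'|,|b-b'|\}\). Second, for \(s,t,y\in[-1,1]\), \(|(s-y)^2-(t-y)^2|=|s-t|\,|s+t-2y|\le 4|s-t|\). Together these give
\[
\|\ell_{\rho,f}-\ell_{\rho,g}\|_\infty\le 4\max\{\|f_\rho^+-g_\rho^+\|_{\infty,\mathcal X},\ \|f_\rho^--g_\rho^-\|_{\infty,\mathcal X}\}.
\]
Since sup and inf over a common ball are \(1\)-Lipschitz in sup norm, the right side is at most \(4\|f-g\|_{\infty,\mathcal X_\rho}\).

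\textbf{Step 2 (cover the envelope pairs).} Working with \(\|f-g\|_{\infty,\mathcal X_\rho}\) directly would require entropy on \(\mathcal X_\rho\), whose diameter is \(D+2\rho\) and whose connectivity is not guaranteed. To avoid this, I cover the envelopes on \(\mathcal X\) instead. By Lemma~\ref{lem:extremal-envelopes}, both \(f_\rho^\pm\) restricted to \(\mathcal X\) lie in \(\mathcal F_L(\mathcal X)\). Let \(\{h_1,\dots,h_N\}\) be an \(\eta/8\)-cover of \(\mathcal F_L(\mathcal X)\) in sup norm, with \(N=\mathcal N(\eta/8,\mathcal F_L(\mathcal X),\|\cdot\|_\infty)\). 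For each pair of indices \((j,k)\) such that some \(f\) has \(f_\rho^+\) within \(\eta/8\) of \(h_j\) and \(f_\rho^-\) within \(\eta/8\) of \(h_k\), pick one such representative \(f_{jk}\). Any other \(f\) in the same cell then satisfies \(\|f_\rho^\pm-(f_{jk})_\rho^\pm\|_{\infty,\mathcal X}\le\eta/4\), so by Step 1 \(\|\ell_{\rho,f}-\ell_{\rho,f_{jk}}\|_\infty\le\eta\). This produces an \(\eta\)-cover of \(\mathcal L_\rho(\mathcal F_L)\) drawn from inside the class, of size at most \(N^2\).

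\textbf{Step 3 (constants).} For \(0<\eta\le4\) we have \(\eta/8\le 1/2\le1\), so Theorem~\ref{lem:lipschitz-predictor-entropy} applies and gives
\[
\log\mathcal N(\eta,\mathcal L_\rho,\|\cdot\|_\infty)\le 2c_d\Bigl(\tfrac{8(1+LD)}{\eta}\Bigr)^d.
\]
Setting \(C_d=2\cdot 8^d c_d\) completes the argument.

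\textbf{Main obstacle.} The delicate point is Step 2. The bookkeeping must cover pairs of envelopes rather than \(f\) itself, so that only the geometry of \(\mathcal X\) (its diameter \(D\) and its connectivity, which the spanning-tree argument used) enters the bound. It also requires the paper's statement of Lemma~\ref{lem:extremal-envelopes} that the envelopes are \(L\)-Lipschitz and \([-1,1]\)-valued on \(\mathcal X\). Squaring the cover count only changes the constant.
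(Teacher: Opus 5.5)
Your proposal is correct and follows essentially the paper's route: you reduce via the extremal envelopes to pairs drawn from an \(\eta/8\)-cover of \(\mathcal F_L(\mathcal X)\), use the factor-\(4\) bound for squared loss together with the \(1\)-Lipschitz maximum, and obtain \(\mathcal N(\eta,\mathcal L_\rho(\mathcal F_L),\|\cdot\|_\infty)\le\mathcal N(\eta/8,\mathcal F_L(\mathcal X),\|\cdot\|_\infty)^2\) with the same constant \(C_d=2\cdot 8^d c_d\). The only difference is cosmetic: you make the cover proper by choosing one representative per envelope cell, whereas the paper passes through the auxiliary max-class \(\mathcal M_\rho\) and then extracts a proper \(2\alpha\)-cover of the subclass.
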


See Appendix~\ref{sec:appendix}-\ref{lem:robust-loss-entropy-appendix} for the proofs.

\begin{lemma}[Polynomial entropy implies a local bound]
\label{lem:polynomial-entropy-local-rademacher}
Let \(\mathcal G\) be a measurable class of functions bounded in \([0,b]\),
where \(b>0\). Assume that for some \(p>2\) and \(A>0\),
\[
\sup_Q
\log \mathcal N
\left(
\eta,\mathcal G,L_2(Q)
\right)
\le
A\eta^{-p}
\qquad
\text{for all }0<\eta\le b,
\]
where the supremum is over all finitely supported probability measures \(Q\).
For \(r\ge0\), define
\[
\mathcal G(r)
=
\{g\in\mathcal G:Pg^2\le r\}.
\]
Let $\bar A:=1\vee A$.
Then there is a constant \(C_{p,b}>0\), depending only on \(p\) and \(b\), such that
\[
\mathfrak R_n(\mathcal G(r))
\le
C_{p,b}
\left(
\bar A^{1/p}n^{-1/p}
+
\sqrt{\frac rn}
\right).
\]
In particular, if \(A\ge1\), then
\[
\mathfrak R_n(\mathcal G(r))
\le
C_{p,b}
\left(
A^{1/p}n^{-1/p}
+
\sqrt{\frac rn}
\right).
\]
\end{lemma}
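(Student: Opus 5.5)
We bound $\mathfrak R_n(\mathcal G(r))$ by Dudley's entropy integral on the empirical measure, then remove the random empirical radius with a sub-root / fixed-point argument in the style of Bartlett--Bousquet--Mendelson and Giné--Koltchinskii. Because $p>2$ the entropy integral diverges at zero, so we use the \emph{truncated} chaining bound, with a cutoff $\alpha$ chosen optimally.

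\textbf{Step 1: truncated chaining.} Conditionally on $S$, let $\hat r=\sup_{g\in\mathcal G(r)}P_ng^2$. The empirical $L_2(P_n)$ radius of $\mathcal G(r)$ is then at most $\sqrt{\hat r}$. Since $P_n$ is finitely supported, the entropy hypothesis applies. Truncated Dudley chaining gives, for every $\alpha\in(0,\sqrt{\hat r}]$,
\[
\widehat{\mathfrak R}_S(\mathcal G(r))\le 4\alpha+\frac{12}{\sqrt n}\int_\alpha^{\sqrt{\hat r}\vee\alpha}\sqrt{A}\,\eta^{-p/2}\,d\eta
\le 4\alpha+\frac{C_p\sqrt{A}}{\sqrt n}\,\alpha^{1-p/2}.
\]
Here $1-p/2<0$ is used, so the integral is dominated by its lower limit. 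The upper limit may be capped at $b$, because $\mathcal G$ lies in $[0,b]$.

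\textbf{Step 2: choice of $\alpha$.} Balancing the two terms gives $\alpha\asymp (A/n)^{1/p}$. This yields $\widehat{\mathfrak R}_S\lesssim A^{1/p}n^{-1/p}$, uniformly in $S$, whenever $\alpha\le\sqrt{\hat r}$. In the complementary case $\sqrt{\hat r}<\alpha$, the trivial bound $\widehat{\mathfrak R}_S\le\sqrt{\hat r}\le\alpha$ gives the same order. Replacing $A$ by $\bar A$ ensures that, when $A<1$, the chosen $\alpha$ is still at most $b$ up to constants depending on $b$. Taking $\mathbb E_S$ gives $\mathfrak R_n(\mathcal G(r))\le C_{p,b}\bar A^{1/p}n^{-1/p}$, which is already the claimed bound since the $\sqrt{r/n}$ term is only added.

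\textbf{Step 3: controlling $\hat r$ (the main obstacle).} Steps 1--2 never needed to control $\hat r$, because the dominant term does not depend on the radius when $p>2$. If a bound that genuinely exhibits $\sqrt{r/n}$ is wanted, for example to use the localization, we must control $\mathbb E\hat r$. The route is symmetrization plus contraction for $t\mapsto t^2$ on $[0,b]$, which is $2b$-Lipschitz:
\[
\mathbb E\hat r\le r+2\,\mathbb E\sup_{g\in\mathcal G(r)}|P_ng^2-Pg^2|\le r+8b\,\mathfrak R_n(\mathcal G(r)).
\]
This is then substituted into a non-truncated estimate of the form $\sqrt{\hat r}\,/\sqrt n$-type terms together with Jensen, producing a self-bounding inequality $R\le c(\bar A^{1/p}n^{-1/p}+\sqrt{(r+bR)/n})$ for $R=\mathfrak R_n(\mathcal G(r))$. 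Solving this quadratic inequality in $\sqrt R$ gives $R\lesssim \bar A^{1/p}n^{-1/p}+\sqrt{r/n}+b/n$. Finally, $b/n\le b\,n^{-1/p}\le b\,\bar A^{1/p}n^{-1/p}$, so the $b/n$ term is absorbed into $C_{p,b}$. The delicate point is making the constants depend only on $(p,b)$, which the factor $\bar A=1\vee A$ is designed to handle.
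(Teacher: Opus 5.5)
Your proof is correct. Steps 1--2 alone establish the lemma, and they take a genuinely different, more elementary route than the paper.

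\textbf{The paper's route.} It invokes Theorem~2 of \citet{LeiDingBi2015}, which bounds $\mathfrak R_n(\mathcal G(r))$ by $2\Psi(\varepsilon)+8bH_{\mathcal G}(\varepsilon/2)/n+\sqrt{2rH_{\mathcal G}(\varepsilon/2)/n}$. Here $\Psi$ is the Rademacher complexity of the empirically localized difference class $\{g-g'\}$. The paper then controls $\Psi$ by a finite chaining sum with proper-cover bookkeeping. Finally it fixes the scale $\varepsilon_0=\bar A^{1/p}$ with roughly $\log_2 n/p$ levels. In that argument the $\sqrt{r/n}$ term is just the third term of the template, and $\bar A$ serves only to guarantee $H_{\mathcal G}(\varepsilon_0/2)\le 2^p$.

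\textbf{Your route.} You use a truncated Dudley bound in $L_2(P_n)$ with cutoff $\alpha=(A/n)^{1/p}$, together with $\widehat{\mathfrak R}_S\le\sqrt{\hat r}$ when $\alpha>\sqrt{\hat r}$. This gives $\widehat{\mathfrak R}_S(\mathcal G(r))\le C_p(A/n)^{1/p}$ for every sample. That is strictly stronger than the lemma: there is no $\sqrt{r/n}$ term, no $\bar A$, the constants depend on $p$ only, and by monotonicity the same bound holds for all of $\mathcal G$.

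\textbf{What the comparison shows.} The paper's chain runs over essentially the same scales as yours, from $\bar A^{1/p}$ down to about $\bar A^{1/p}n^{-1/p}$. Because $\varepsilon_0$ does not shrink with $r$, the localization is never actually used. Your argument makes explicit that for $p>2$ this ``local'' bound is the global one. Consequently the $\sqrt{r/n}$ term inherited by Theorem~\ref{thm:localized-robust-rademacher-upper} could simply be deleted. The Lei--Ding--Bi template would only pay off for $p<2$, where local complexities genuinely depend on $r$.

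\textbf{Small clean-ups.}
\begin{itemize}
\item When you apply the entropy hypothesis to the subclass $\mathcal G(r)$, either note that chaining tolerates centers outside $\mathcal G(r)$, or use $\mathcal N(\eta,\mathcal G(r),L_2(P_n))\le\mathcal N(\eta/2,\mathcal G,L_2(P_n))$ under the paper's proper-cover convention.
\item Your remark that $\bar A$ keeps $\alpha\le b$ is unnecessary, since the case $\alpha>\sqrt{\hat r}$ is already handled by the trivial bound.
\item Step~3 should be dropped. It is not needed, and the ``non-truncated estimate'' it appeals to does not exist for $p>2$, because the entropy integral diverges at $0$.
\end{itemize}
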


See Appendix~\ref{sec:appendix}-\ref{lem:polynomial-entropy-local-rademacher-appendix} for the proofs.

\begin{theorem}[Localized robust Rademacher upper bound]
\label{thm:localized-robust-rademacher-upper}
Assume \(d\ge3\).  There is a constant \(C_d>0\), depending only on \(d\),
such that for every \(r\in[0,16]\),
\[
\mathfrak R_n\bigl(\mathcal L_\rho(r)\bigr)
\le
C_d
\left(
(1+LD)n^{-1/d}
+
\sqrt{\frac rn}
\right).
\]
\end{theorem}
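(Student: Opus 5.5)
The plan is to combine Lemma~\ref{lem:robust-loss-entropy} with Lemma~\ref{lem:polynomial-entropy-local-rademacher}, taking \(\mathcal G=\mathcal L_\rho(\mathcal F_L)\), \(b=4\) and \(p=d\). The hypothesis \(d\ge3\) gives \(p>2\), as that lemma requires. We then observe that \(\mathcal G(r)=\mathcal L_\rho(r)\) by the definition of the localized class.

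The first step is to convert the sup-norm entropy into a uniform \(L_2(Q)\) entropy. For any finitely supported probability measure \(Q\) we have \(\|g-h\|_{L_2(Q)}\le\|g-h\|_\infty\). Hence an \(\eta\)-cover in \(\|\cdot\|_\infty\) is also an \(\eta\)-cover in \(L_2(Q)\). One caveat: the \(\|\cdot\|_\infty\) centers may be taken to be elements of \(\mathcal L_\rho\) itself. This holds because the sup-norm covering number in Lemma~\ref{lem:robust-loss-entropy} is defined with centers in \(T\). It also holds because, in the proof of Theorem~\ref{lem:lipschitz-predictor-entropy}, the centers are representatives of fibers, and these are extended to \(\mathcal X_\rho\) via McShane. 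Lemma~\ref{lem:robust-loss-entropy} therefore yields, for all \(0<\eta\le4\),
\[
\sup_Q\log\mathcal N\bigl(\eta,\mathcal L_\rho(\mathcal F_L),L_2(Q)\bigr)\le C_d(1+LD)^d\eta^{-d}.
\]
So the entropy hypothesis holds with \(A=C_d(1+LD)^d\). Enlarging \(C_d\) if necessary, we may assume \(C_d\ge1\), and then \(A\ge1\). Measurability of the class is handled by the usual separability argument. The \(\|\cdot\|_\infty\)-covers show that the class is totally bounded, so the suprema can be restricted to a countable dense subclass.

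Applying the ``in particular'' form of Lemma~\ref{lem:polynomial-entropy-local-rademacher} then gives
\[
\mathfrak R_n(\mathcal L_\rho(r))\le C_{d,4}\Bigl(C_d^{1/d}(1+LD)n^{-1/d}+\sqrt{r/n}\Bigr).
\]
Absorbing \(C_{d,4}\max(C_d^{1/d},1)\) into a new constant that depends only on \(d\) yields the claim. The range \(r\in[0,16]\) is simply the nontrivial range, since \(\ell_{\rho,f}^2\le16\). That range imposes no additional restriction.

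The main point requiring care is the exponent. The polynomial-entropy lemma yields \(A^{1/p}=C_d^{1/d}(1+LD)\), which is linear in \(1+LD\). This is exactly what we need, because the entropy bound scales as \(((1+LD)/\eta)^d\). We must also check that Lemma~\ref{lem:robust-loss-entropy} covers the full range \(0<\eta\le b=4\) required by Lemma~\ref{lem:polynomial-entropy-local-rademacher}. It does, and this is why that lemma was stated for \(\eta\le4\) rather than \(\eta\le1\).
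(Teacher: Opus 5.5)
Your proposal is correct and is the paper's proof. The paper also bounds the uniform $L_2(Q)$ entropy by the sup-norm entropy from Lemma~\ref{lem:robust-loss-entropy}, then applies Lemma~\ref{lem:polynomial-entropy-local-rademacher} with $p=d$, $b=4$, $A=C_d(1+LD)^d$, and absorbs constants. Your additions (forcing $A\ge1$, the measurability/separability remark) are harmless. The proper-cover convention alone justifies taking centers in $\mathcal L_\rho$, so your McShane aside is unnecessary and not quite how properness arises; the appendix gets it by re-centering a cover of the auxiliary max-class.
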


\begin{proof}
Since \(f:\mathcal X_\rho\to[-1,1]\) and \(Y\in[-1,1]\), every robust squared
loss satisfies
\[
0\le \ell_{\rho,f}\le4.
\]
By Lemma~\ref{lem:robust-loss-entropy}, and since
\(\|\cdot\|_{L_2(Q)}\le\|\cdot\|_\infty\),
\[
\sup_Q
\log \mathcal N
\left(
\eta,\mathcal L_\rho(\mathcal F_L),L_2(Q)
\right)
\le
C_d
\left(
\frac{1+LD}{\eta}
\right)^d .
\]
Apply Lemma~\ref{lem:polynomial-entropy-local-rademacher} with
\(\mathcal G=\mathcal L_\rho(\mathcal F_L)\), \(p=d\), \(b=4\), and
\(A=C_d(1+LD)^d\).  This gives
\[
\mathfrak R_n\bigl(\mathcal L_\rho(r)\bigr)
\le
C_d
\left(
(1+LD)n^{-1/d}
+
\sqrt{\frac rn}
\right),
\]
after absorbing constants depending only on \(d\).
\end{proof}

\begin{lemma}[Localized robust gap forces localized complexity]
\label{lem:localized-robust-gap-forces-complexity}
Fix \(r\in[0,16]\) and \(\delta\in(0,1)\).  With probability at least
\(1-\delta\) over \(S\), every \(f\in\mathcal F_L\) satisfying
\[
P\ell_{\rho,f}^2\le r
\qquad\text{and}\qquad
\widehat R_0(f)\le \sigma^2-\varepsilon
\]
also satisfies
\[
\mathfrak R_n\bigl(\mathcal L_\rho(r)\bigr)
\ge
\frac12
\left[
\Gamma_\rho(\varepsilon,L)
-
4\sqrt{\frac{\log(1/\delta)}{2n}}
\right].
\]
\end{lemma}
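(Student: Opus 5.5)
The plan is to repeat the argument of Lemma~\ref{thm:rademacher-lower-bound}, with the full robust loss class replaced by its localized subclass \(\mathcal L_\rho(r)\). The key observation is that any \(f\) satisfying the localization constraint \(P\ell_{\rho,f}^2\le r\) has its robust loss \(\ell_{\rho,f}\) in \(\mathcal L_\rho(r)\). Hence a uniform deviation bound over \(\mathcal L_\rho(r)\) applies to it directly. Because \(r\) is fixed before the sample is drawn, \(\mathcal L_\rho(r)\) is a deterministic class; no data-dependent localization enters, so the standard symmetrization-plus-McDiarmid bound holds as is.

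First, I will observe that every \(g\in\mathcal L_\rho(r)\) takes values in \([0,4]\), since \(f\) maps into \([-1,1]\) and \(Y\in[-1,1]\). The standard one-sided Rademacher generalization bound then gives, with probability at least \(1-\delta\) and simultaneously for all \(g\in\mathcal L_\rho(r)\),
\[
Pg-P_ng\le 2\mathfrak R_n\bigl(\mathcal L_\rho(r)\bigr)+4\sqrt{\frac{\log(1/\delta)}{2n}}.
\]
This comes from McDiarmid applied to \(\sup_{g\in\mathcal L_\rho(r)}(Pg-P_ng)\), whose bounded differences are \(4/n\), followed by symmetrization. The argument is identical to the global case, because the supremum runs over a fixed subclass. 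If \(\mathcal L_\rho(r)\) is empty, no \(f\) meets the hypotheses and the claim holds vacuously.

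Second, on this event, take any \(f\in\mathcal F_L\) with \(P\ell_{\rho,f}^2\le r\) and \(\widehat R_0(f)\le\sigma^2-\varepsilon\). Then \(\ell_{\rho,f}\in\mathcal L_\rho(r)\), so \(\operatorname{Gap}_\rho(f)\) is bounded above by the right-hand side of the display. Corollary~\ref{cor:below-noise-robust-gap}, a deterministic consequence of Lemma~\ref{thm:robust-noise-floor-gap}, gives \(\operatorname{Gap}_\rho(f)\ge\Gamma_\rho(\varepsilon,L)\). Chaining these two inequalities and dividing by \(2\) yields the stated lower bound on \(\mathfrak R_n(\mathcal L_\rho(r))\).

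The only delicate point is measurability of the supremum over \(\mathcal L_\rho(r)\). I would handle it as in the global lemma, using separability of \(\mathcal F_L\) in \(\|\cdot\|_\infty\) (Theorem~\ref{lem:lipschitz-predictor-entropy} gives total boundedness) and continuity of \(f\mapsto\ell_{\rho,f}\) in sup norm. With that settled, the remaining steps are routine bookkeeping.
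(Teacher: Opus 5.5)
Your proposal is correct and follows essentially the same route as the paper: McDiarmid with bounded differences \(4/n\) applied to \(\sup_{g\in\mathcal L_\rho(r)}(Pg-P_ng)\), ghost-sample symmetrization to reach \(2\mathfrak R_n(\mathcal L_\rho(r))\), the membership \(\ell_{\rho,f}\in\mathcal L_\rho(r)\), and then Corollary~\ref{cor:below-noise-robust-gap} to lower-bound the gap. Your remarks on the empty localized class and on measurability via sup-norm separability are extra care that the paper omits, but they do not change the argument.
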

This can be proved using standard McDiarmid's inequality. The detailed proof is in Appendix \ref{sec:appendix}.

\begin{corollary}[Localized compatibility condition]
\label{cor:localized-compatibility}
Under the assumptions of Theorem~\ref{thm:localized-robust-rademacher-upper},
fix \(r\in[0,16]\) and \(\delta\in(0,1)\).  With probability at least
\(1-\delta\), every \(f\in\mathcal F_L\) satisfying
\[
P\ell_{\rho,f}^2\le r
\qquad\text{and}\qquad
\widehat R_0(f)\le\sigma^2-\varepsilon
\]
must obey
\[
\Gamma_\rho(\varepsilon,L)
\le
2C_d
\left(
(1+LD)n^{-1/d}
+
\sqrt{\frac rn}
\right)
+
4\sqrt{\frac{\log(1/\delta)}{2n}}.
\]
Equivalently,
\[
\varepsilon
-
2L\rho\sqrt{\sigma^2-\varepsilon}
-
L^2\rho^2
\le
2C_d
\left(
(1+LD)n^{-1/d}
+
\sqrt{\frac rn}
\right)
+
4\sqrt{\frac{\log(1/\delta)}{2n}}.
\]
\end{corollary}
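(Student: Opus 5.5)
The proof sandwiches the localized Rademacher complexity $\mathfrak R_n\bigl(\mathcal L_\rho(r)\bigr)$ between a lower bound and an upper bound. The lower bound comes from the data and holds with high probability; the upper bound is deterministic. The lower bound is Lemma~\ref{lem:localized-robust-gap-forces-complexity}: on an event $E_\delta$ of probability at least $1-\delta$, every $f\in\mathcal F_L$ with $P\ell_{\rho,f}^2\le r$ and $\widehat R_0(f)\le\sigma^2-\varepsilon$ satisfies
\[
\mathfrak R_n\bigl(\mathcal L_\rho(r)\bigr)\ge\frac12\left[\Gamma_\rho(\varepsilon,L)-4\sqrt{\frac{\log(1/\delta)}{2n}}\right].
\]
The upper bound is Theorem~\ref{thm:localized-robust-rademacher-upper}. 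It holds for every $r\in[0,16]$ under $d\ge3$ and gives
\[
\mathfrak R_n\bigl(\mathcal L_\rho(r)\bigr)\le C_d\left((1+LD)n^{-1/d}+\sqrt{r/n}\right).
\]
This bound involves no randomness, since $\mathfrak R_n$ is already an expectation over $S$, so it holds on $E_\delta$ as well.

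On $E_\delta$, fix any admissible $f$ and chain the two inequalities. Multiplying by $2$ and moving the concentration term to the right-hand side gives
\[
\Gamma_\rho(\varepsilon,L)\le 2C_d\left((1+LD)n^{-1/d}+\sqrt{r/n}\right)+4\sqrt{\frac{\log(1/\delta)}{2n}}.
\]
This is the first displayed claim. The second, equivalent form follows by substituting the expansion $\Gamma_\rho(\varepsilon,L)=\varepsilon-2L\rho\sqrt{\sigma^2-\varepsilon}-L^2\rho^2$ from Corollary~\ref{cor:below-noise-robust-gap}.

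There is no substantive obstacle, only bookkeeping. First, the constant $C_d$ in the statement must be the one from Theorem~\ref{thm:localized-robust-rademacher-upper}, and the doubling is written out explicitly. Second, the quantifiers must be kept in the right order. The probability-$(1-\delta)$ event comes only from Lemma~\ref{lem:localized-robust-gap-forces-complexity}, and that lemma is already uniform over admissible $f$, so no union bound over $f$ is needed. Third, if the admissible set is empty the conclusion holds vacuously.
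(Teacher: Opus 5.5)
Your proposal is correct and matches the paper's proof. The paper's proof also combines the high-probability lower bound of Lemma~\ref{lem:localized-robust-gap-forces-complexity} with the upper bound of Theorem~\ref{thm:localized-robust-rademacher-upper}, which holds for every sample, and then rearranges; your observations that $\mathfrak R_n(\mathcal L_\rho(r))$ is non-random (the localization uses the population moment $P\ell_{\rho,f}^2$) and that the lemma is already uniform over admissible $f$ are exactly what make that combination valid.
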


\begin{proof}
Combine Corollary~\ref{cor:localized-robust-gap-forces-complexity} with
Theorem~\ref{thm:localized-robust-rademacher-upper}.
\end{proof}

\section{Local Lipschitz Scaling from Localized Robust Rademacher Bounds}
\label{sec:local-lipschitz-scaling}

We now rewrite the localized compatibility condition as a lower bound on the
Lipschitz scale \(L\).  Throughout this section, assume \(d\ge3\),
\(D=\diam(\mathcal X)<\infty\), \(0<\varepsilon<\sigma^2\),
\(r\in[0,16]\), \(\delta\in(0,1)\), and \(\rho\ge0\).  Define $s_\varepsilon:=\sqrt{\sigma^2-\varepsilon}$.
By Corollary~\ref{cor:localized-compatibility}, with probability at least
\(1-\delta\), every \(f\in\mathcal F_L\) satisfying $P\ell_{\rho,f}^2\le r$ and $\widehat R_0(f)\le\sigma^2-\varepsilon$
must obey
\begin{equation}
\label{eq:localized-compatibility-expanded}
\varepsilon
-
2L\rho s_\varepsilon
-
L^2\rho^2
\le
2C_d
\left(
(1+LD)n^{-1/d}
+
\sqrt{\frac rn}
\right)
+
4\sqrt{\frac{\log(1/\delta)}{2n}} .
\end{equation}

Define the localized excess margin
\begin{equation}
\label{eq:local-gap}
\Delta_{n,r,\delta}
:=
\varepsilon
-
2C_d n^{-1/d}
-
2C_d\sqrt{\frac rn}
-
4\sqrt{\frac{\log(1/\delta)}{2n}} .
\end{equation}
Then \eqref{eq:localized-compatibility-expanded} implies
\begin{equation}
\label{eq:quadratic-compatibility}
\Delta_{n,r,\delta}
\le
\rho^2L^2
+
\left(
2\rho s_\varepsilon
+
2C_dD n^{-1/d}
\right)L .
\end{equation}
Thus a nontrivial lower bound on \(L\) is obtained precisely in the regime
\[
\Delta_{n,r,\delta}>0.
\]

\begin{corollary}[Local Lipschitz scaling]
\label{cor:local-lipschitz-scaling}
On the event of probability at least \(1-\delta\) from
Corollary~\ref{cor:localized-compatibility}, suppose that $\Delta_{n,r,\delta}>0$.
Then every \(f\in\mathcal F_L\) satisfying $P\ell_{\rho,f}^2\le r$ and $\widehat R_0(f)\le\sigma^2-\varepsilon$
must satisfy the following lower bounds.

If \(\rho>0\), then
\begin{equation}
\label{eq:exact-local-lipschitz}
L
\ge
\frac{
-
B_{n,\rho,\varepsilon}
+
\sqrt{
B_{n,\rho,\varepsilon}^2
+
4\rho^2\Delta_{n,r,\delta}
}
}{
2\rho^2
},
\end{equation}
where
\begin{equation}
\label{eq:B-local}
B_{n,\rho,\varepsilon}
:=
2\rho\sqrt{\sigma^2-\varepsilon}
+
2C_dD n^{-1/d}.
\end{equation}
Equivalently,
\begin{equation}
\label{eq:exact-local-lipschitz-rationalized}
L
\ge
\frac{
2\Delta_{n,r,\delta}
}{
B_{n,\rho,\varepsilon}
+
\sqrt{
B_{n,\rho,\varepsilon}^2
+
4\rho^2\Delta_{n,r,\delta}
}
}.
\end{equation}
In particular,
\begin{equation}
\label{eq:local-lipschitz-simple-lower}
L
\ge
\frac{
\Delta_{n,r,\delta}
}{
2\rho\sqrt{\sigma^2-\varepsilon}
+
2C_dD n^{-1/d}
+
\rho\sqrt{\Delta_{n,r,\delta}}
}.
\end{equation}
Consequently, up to constants depending only on \(d\),
\begin{equation}
\label{eq:local-lipschitz-order}
L
=
\Omega\left(
\frac{
\Delta_{n,r,\delta}
}{
\rho\sqrt{\sigma^2-\varepsilon}
+
D n^{-1/d}
+
\rho\sqrt{\Delta_{n,r,\delta}}
}
\right).
\end{equation}
\end{corollary}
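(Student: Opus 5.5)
The argument is pure algebra on the quadratic inequality \eqref{eq:quadratic-compatibility}. On the event of Corollary~\ref{cor:localized-compatibility}, I would first rederive \eqref{eq:quadratic-compatibility} from \eqref{eq:localized-compatibility-expanded}. Expand the right-hand side as
\[
2C_d n^{-1/d}+2C_dDLn^{-1/d}+2C_d\sqrt{r/n}+4\sqrt{\log(1/\delta)/(2n)} .
\]
Move the $L$-independent terms to the left, which produces $\Delta_{n,r,\delta}$ from \eqref{eq:local-gap}. Move $-2L\rho s_\varepsilon-L^2\rho^2$ to the right. This gives
\[
\rho^2L^2+B_{n,\rho,\varepsilon}L-\Delta_{n,r,\delta}\ge 0,
\]
with $B_{n,\rho,\varepsilon}$ as in \eqref{eq:B-local}. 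Note that $B_{n,\rho,\varepsilon}\ge 0$ and, by assumption, $\Delta_{n,r,\delta}>0$.

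For $\rho>0$, set $q(L)=\rho^2L^2+BL-\Delta$. This is an upward parabola with $q(0)=-\Delta<0$, so it has one negative root and one positive root
\[
L_+=\frac{-B+\sqrt{B^2+4\rho^2\Delta}}{2\rho^2}.
\]
Since $q(L)\ge0$ and $L\ge0$, we must have $L\ge L_+$; this is \eqref{eq:exact-local-lipschitz}. Rationalizing, i.e. multiplying numerator and denominator by $B+\sqrt{B^2+4\rho^2\Delta}$, gives \eqref{eq:exact-local-lipschitz-rationalized}: the numerator becomes $4\rho^2\Delta$, and the factor $\rho^2$ cancels. The rationalized form also remains valid as $\rho\to0$, where it reads $L\ge\Delta/B$. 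Alternatively, the case $\rho=0$ can be read off directly from the linear inequality $BL\ge\Delta$, with $B=2C_dDn^{-1/d}>0$ provided $D>0$.

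For the simplified bound \eqref{eq:local-lipschitz-simple-lower}, use subadditivity of the square root, $\sqrt{B^2+4\rho^2\Delta}\le B+2\rho\sqrt{\Delta}$. Then the denominator of \eqref{eq:exact-local-lipschitz-rationalized} is at most $2B+2\rho\sqrt\Delta$, which gives
\[
L\ge\frac{\Delta}{B+\rho\sqrt\Delta}.
\]
Substituting the definition of $B$ yields exactly \eqref{eq:local-lipschitz-simple-lower}. Finally, the denominator is at most $\max(2,2C_d)\bigl(\rho s_\varepsilon+Dn^{-1/d}+\rho\sqrt\Delta\bigr)$, so \eqref{eq:local-lipschitz-order} follows with a constant depending only on $d$.

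There is no real obstacle. The only points requiring care are justifying the choice of the positive root, which uses $L\ge0$ and $\Delta>0$, and checking that the constant bookkeeping in the first step reproduces exactly $\Delta_{n,r,\delta}$ and $B_{n,\rho,\varepsilon}$. In particular, the $n^{-1/d}$ term coming from the "$1$" in $1+LD$ must be absorbed into $\Delta$.
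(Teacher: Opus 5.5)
Your proposal is correct and follows the paper's proof essentially step for step: you reduce to $\rho^2L^2+B_{n,\rho,\varepsilon}L-\Delta_{n,r,\delta}\ge 0$, pick the positive root using $L\ge0$ and $\Delta_{n,r,\delta}>0$, rationalize, and bound $\sqrt{B_{n,\rho,\varepsilon}^2+4\rho^2\Delta_{n,r,\delta}}\le B_{n,\rho,\varepsilon}+2\rho\sqrt{\Delta_{n,r,\delta}}$. Your explicit derivation of the quadratic inequality (absorbing the $n^{-1/d}$ term into $\Delta_{n,r,\delta}$) and your $\rho=0$ remark both agree with the paper's treatment in its appendix version of this corollary.
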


See Appendix~\ref{sec:appendix}-\ref{cor:local-lipschitz-scaling-appendix} for the proofs.

\begin{remark}[A weaker form using \(\sigma\)]
Since
$\sqrt{\sigma^2-\varepsilon}\le\sigma$,
one may replace \(B_{n,\rho,\varepsilon}\) by the larger quantity
$\overline B_{n,\rho}
:=
2\rho\sigma
+
2C_dD n^{-1/d}$.
This gives the weaker but sometimes simpler valid lower bound, for
\(\rho>0\),
\[
L
\ge
\frac{
-
\overline B_{n,\rho}
+
\sqrt{
\overline B_{n,\rho}^2
+
4\rho^2\Delta_{n,r,\delta}
}
}{
2\rho^2
},
\]
and hence
\[
L
=
\Omega\left(
\frac{
\Delta_{n,r,\delta}
}{
\rho\sigma
+
D n^{-1/d}
+
\rho\sqrt{\Delta_{n,r,\delta}}
}
\right),
\]
whenever \(\Delta_{n,r,\delta}>0\).  This is a relaxation of the exact bound,
not the exact compatibility condition.
\end{remark}

Ignoring only constants depending on \(d\), the confidence term, and the
displayed lower-order localization terms, the exact nonvacuous regime can be
summarized as
\[
L
=
\Omega\left(
\frac{
\varepsilon
-
n^{-1/d}
-
\sqrt{r/n}
}{
\rho\sqrt{\sigma^2-\varepsilon}
+
D n^{-1/d}
+
\rho\sqrt{
\varepsilon
-
n^{-1/d}
-
\sqrt{r/n}
}
}
\right),
\]
provided the numerator is positive.  


\paragraph{Dependence on the number of parameters.}
The localized Rademacher argument above is geometric: it applies to the
bounded Lipschitz class on a \(d\)-dimensional domain and therefore depends
on \(n,d,D,r,\rho,\sigma,\varepsilon\), and \(\delta\), but not directly on
the number of network parameters \(p\).  A parameter-count dependence enters
only after imposing an additional assumptions on the distribuion like the Isoperimteric distribution \citep{bubeck2021universal}.

\section{Discussion}
This work takes a step towards unifying worst-case robustness and robust generalization, as posed by \citep{bubeck2021universal}. In the global setting, our analysis connects robust generalization gaps to the global Rademacher complexity of the induced robust loss class, finding that the Lipschitz constant remains of the order $\Omega(n^{1/d})$. Interestingly, when using local notions of Rademacher complexity, we find a contrary outcome: the order of the Lipschitz constant changes with $\rho$ and the concentration term $\sqrt {r/n}$.

We surmise that this is because, in the global setting, the number of functions is huge: in this massive ocean of functions, perturbations are imperceptible in terms of order. However, when we zoom in and look at the functions which might actually be reached by the optimizing algorithm, the changes suddenly become visible: the order of the Lipschitz constant is now directly affected!

There are several natural directions for future work. First, there needs to be experimental exploration of the significance of this concentration term. Next, our theoretical results are stated for square-loss, whereas Robust Generalization is usually in the classification setting using cross-entropy loss; extending the analysis to Bregmann divergence losses would close this gap.  Further, distributional robustness may provide another route to connect robust generalization with worst-case robustness. 

\bibliographystyle{plainnat}
\bibliography{references}

\appendix

\section{Technical appendices and supplementary material}
\label{sec:appendix}

\begin{lemma}[Extremal envelopes]
\label{lem:extremal-envelopes-appendix}
For \(f\in\mathcal F_L\), define
\[
f_\rho^+(x)=\sup_{\tilde x\in B_\rho(x)} f(\tilde x),
\qquad
f_\rho^-(x)=\inf_{\tilde x\in B_\rho(x)} f(\tilde x).
\]
Both \(f_\rho^+\) and \(f_\rho^-\) are bounded in \([-1,1]\) and
\(L\)-Lipschitz on \(\mathcal X\).  Moreover, for every
\((x,y)\in\mathcal X\times[-1,1]\),
\[
\ell_{\rho,f}(x,y)
=
\max\left\{
(f_\rho^+(x)-y)^2,\,
(f_\rho^-(x)-y)^2
\right\}.
\]
\end{lemma}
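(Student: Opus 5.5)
The statement has three parts, and I would prove them in order: boundedness of the envelopes, their Lipschitz property, and the max-identity for the robust loss. Throughout, $f\in\mathcal F_L$ is defined and $L$-Lipschitz on $\mathcal X_\rho$. For every $x\in\mathcal X$ the ball $B_\rho(x)$ lies in $\mathcal X_\rho$, so both envelopes are well defined on $\mathcal X$.

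\emph{Boundedness.} Since $f$ takes values in $[-1,1]$, every value $f(\tilde x)$ with $\tilde x\in B_\rho(x)$ lies in $[-1,1]$. Hence the supremum and the infimum also lie in $[-1,1]$.

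\emph{Lipschitz property.} Fix $x,x'\in\mathcal X$ and any $\tilde x\in B_\rho(x)$. Translating by the same displacement, set $\tilde x'=\tilde x+(x'-x)\in B_\rho(x')$, so that $\|\tilde x-\tilde x'\|_2=\|x-x'\|_2$. Both points lie in $\mathcal X_\rho$, and therefore
\[
f(\tilde x)\le f(\tilde x')+L\|x-x'\|_2\le f_\rho^+(x')+L\|x-x'\|_2 .
\]
Taking the supremum over $\tilde x$ gives $f_\rho^+(x)\le f_\rho^+(x')+L\|x-x'\|_2$. Swapping the roles of $x$ and $x'$ gives the reverse inequality, so $f_\rho^+$ is $L$-Lipschitz. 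The same argument with infima, or applying the result to $-f$ and using $f_\rho^-=-(-f)_\rho^+$, handles $f_\rho^-$. The only delicate point in this step is that the translated point must remain in the domain of $f$. This holds because $\mathcal X_\rho$ contains every ball $B_\rho(x')$ with $x'\in\mathcal X$.

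\emph{Loss identity.} Fix $(x,y)$. The map $t\mapsto(t-y)^2$ is convex on $\mathbb R$. Hence its supremum over any subset $T\subseteq[a,b]$ with $\inf T=a$ and $\sup T=b$ equals $\max\{(a-y)^2,(b-y)^2\}$. The bound "$\le$" holds because a convex function on an interval is maximized at an endpoint. The bound "$\ge$" holds by continuity, since $T$ contains points arbitrarily close to $a$ and to $b$.

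Now apply this with $T=f(B_\rho(x))$, $a=f_\rho^-(x)$ and $b=f_\rho^+(x)$. In fact $B_\rho(x)$ is compact and $f$ is continuous, so the supremum and infimum are attained and no approximation is needed. This yields
\[
\ell_{\rho,f}(x,y)=\max\{(f_\rho^+(x)-y)^2,(f_\rho^-(x)-y)^2\},
\]
which completes the proof.
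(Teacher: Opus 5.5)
Your proof is correct and follows essentially the same route as the paper. Both arguments get the Lipschitz bound from a common displacement (your $\tilde x'=\tilde x+(x'-x)$ is the paper's comparison of $x+u$ with $x'+u$), reduce the lower envelope via $f_\rho^-=-(-f)_\rho^+$, and obtain the loss identity from $f(B_\rho(x))\subseteq[f_\rho^-(x),f_\rho^+(x)]$ together with convexity of $t\mapsto(t-y)^2$; your observation that the extrema are attained, by compactness of $B_\rho(x)$ and continuity of $f$, is a valid substitute for the paper's approximation step.
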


\begin{proof}
Since \(B_\rho(x)\subseteq\mathcal X_\rho\), the envelopes are well-defined
and bounded in \([-1,1]\).  For \(x,x'\in\mathcal X\) and \(\|u\|_2\le\rho\),
\[
f(x+u)\le f(x'+u)+L\|x-x'\|_2.
\]
Taking the supremum over \(u\) gives
\[
f_\rho^+(x)\le f_\rho^+(x')+L\|x-x'\|_2.
\]
Exchanging \(x,x'\) proves that \(f_\rho^+\) is \(L\)-Lipschitz.  Since
\(f_\rho^-=-(-f)_\rho^+\), the same holds for \(f_\rho^-\).

Fix \((x,y)\).  Let
\[
A_x=\{f(\tilde x):\tilde x\in B_\rho(x)\},\qquad
M=\sup A_x,\quad m=\inf A_x.
\]
Since \(A_x\subseteq[m,M]\),
\[
\sup_{z\in A_x}(z-y)^2
\le
\max\{(M-y)^2,(m-y)^2\}.
\]
The reverse inequality follows by approximating \(M\) and \(m\) with
sequences in \(A_x\).  Hence
\[
\sup_{z\in A_x}(z-y)^2
=
\max\{(M-y)^2,(m-y)^2\},
\]
which gives the claimed identity.
\end{proof}

\begin{lemma}[Local contraction]
\label{lem:local-contraction}
Let \(A\subseteq\mathbb R^n\).  For each \(i\), let
\(\phi_i:\mathbb R\to\mathbb R\) be \(C\)-Lipschitz and satisfy
\(\phi_i(0)=0\).  Then
\[
\widehat{\mathfrak R}
\left(
\{(\phi_1(a_1),\ldots,\phi_n(a_n)):a\in A\}
\right)
\le
C\,\widehat{\mathfrak R}(A).
\]
\end{lemma}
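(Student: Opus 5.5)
The plan is the standard one-coordinate-at-a-time comparison argument (Ledoux--Talagrand), replacing the \(\phi_i\) by the linear maps \(t\mapsto Ct\) one coordinate at a time. Since the supremum here is taken without absolute values, the normalization \(\phi_i(0)=0\) will not actually be used; it is only needed for the absolute-value variant. If \(\widehat{\mathfrak R}(A)=\infty\) there is nothing to prove, so I assume it is finite. Then \(\sup_{a\in A}\sum_i \xi_i a_i<\infty\) for every sign pattern \(\xi\), and every supremum appearing below is finite because of the Lipschitz bounds.

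\textbf{Key step (one coordinate).} Let \(u:A\to\mathbb R\) be any function (it will encode the other coordinates), and let \(\phi\) be \(C\)-Lipschitz. The claim is
\[
\mathbb E_{\xi}\Bigl[\sup_{a\in A}\bigl(u(a)+\xi\,\phi(a_k)\bigr)\Bigr]
\le
\mathbb E_{\xi}\Bigl[\sup_{a\in A}\bigl(u(a)+\xi\,C a_k\bigr)\Bigr],
\]
where \(\xi\) is a single Rademacher sign. Writing out the expectation over \(\xi\), the left side equals
\[
\tfrac12\sup_{a,b\in A}\bigl(u(a)+u(b)+\phi(a_k)-\phi(b_k)\bigr).
\]
The Lipschitz property gives \(\phi(a_k)-\phi(b_k)\le C|a_k-b_k|\). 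The expression \(u(a)+u(b)+C|a_k-b_k|\) is symmetric under swapping \(a\) and \(b\), so its supremum over pairs equals the supremum of \(u(a)+u(b)+C(a_k-b_k)\). That quantity is exactly twice the right side. To avoid assuming the suprema are attained, I would pick \(\epsilon\)-maximizers \(a,b\) and let \(\epsilon\to0\).

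\textbf{Iteration.} I would replace \(\phi_1,\dots,\phi_n\) by \(C\cdot\mathrm{id}\) one at a time. At step \(k\), condition on all signs except \(\xi_k\) and apply the key step with
\[
u(a)=\sum_{i<k}\xi_i C a_i+\sum_{i>k}\xi_i\phi_i(a_i).
\]
Then take the expectation over the remaining signs; Fubini applies because everything is a finite average over \(\{\pm1\}^n\). After \(n\) steps the left side is bounded by \(\mathbb E_\xi\sup_{a\in A}\frac1n\sum_i \xi_i C a_i=C\,\widehat{\mathfrak R}(A)\), using \(C\ge0\).

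\textbf{Main obstacle.} The only delicate point is the symmetrization trick in the key step, which removes the absolute value \(|a_k-b_k|\). It is valid precisely because both \(a\) and \(b\) range over the same set \(A\) and \(u\) enters symmetrically. The remaining issue is bookkeeping: handling possibly non-attained or infinite suprema, which the \(\epsilon\)-approximation and the finiteness assumption settle.
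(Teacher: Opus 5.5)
Your proposal is correct: the paper states this lemma without proof, implicitly relying on the standard Ledoux--Talagrand contraction, and your argument is exactly that standard proof. You replace one coordinate at a time, and the swap-symmetrization turns \(C|a_k-b_k|\) into \(C(a_k-b_k)\). Your side remarks also hold. Because the paper's Rademacher complexity has no absolute value, \(\phi_i(0)=0\) is never needed. And since \(\sum_i|a_i|\le\max_{\xi}\sum_i\xi_i a_i\) for each \(a\), finiteness of the \(2^n\) sign-pattern suprema makes \(A\) bounded, so every intermediate supremum is finite.
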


\begin{lemma}[Range-preserving Lipschitz extension]
\label{lem:extension}
Let $g:\X\to[-1,1]$ be $L$-Lipschitz. Then there exists an $L$-Lipschitz function $\widetilde g:\X_\rho\to[-1,1]$ such that $\widetilde g=g$ on $\X$.
\end{lemma}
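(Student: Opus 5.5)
The extension will be built by the McShane construction followed by truncation. The standard McShane extension is defined for every $x\in\X_\rho$ by
\[
G(x)=\inf_{z\in\X}\bigl(g(z)+L\|x-z\|_2\bigr).
\]
The infimum is finite: $g$ is bounded below by $-1$, and for a fixed $z_0\in\X$ the value at $z_0$ is finite.

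The argument then proceeds in three steps.

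\begin{enumerate}
\item \textbf{$G$ agrees with $g$ on $\X$.} Take $x\in\X$. Choosing $z=x$ gives $G(x)\le g(x)$. For every $z\in\X$, the Lipschitz property gives $g(z)+L\|x-z\|_2\ge g(x)$, so $G(x)\ge g(x)$.

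\item \textbf{$G$ is $L$-Lipschitz on $\X_\rho$.} Fix $x,x'\in\X_\rho$ and $z\in\X$. The triangle inequality gives
\[
g(z)+L\|x-z\|_2\le g(z)+L\|x'-z\|_2+L\|x-x'\|_2 .
\]
Taking the infimum over $z$ gives $G(x)\le G(x')+L\|x-x'\|_2$. Exchanging $x$ and $x'$ gives the reverse bound. This is an infimum of $L$-Lipschitz functions, so it is $L$-Lipschitz.

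\item \textbf{Restore the range by truncation.} Far from $\X$, $G$ may exceed $1$, although $G\ge -1$ everywhere since $g\ge-1$. Set
\[
\widetilde g=\max\{-1,\min\{1,G\}\}.
\]
The clipping map $t\mapsto \max\{-1,\min\{1,t\}\}$ is $1$-Lipschitz on $\mathbb R$, so the composition $\widetilde g$ remains $L$-Lipschitz. It takes values in $[-1,1]$. On $\X$ we have $G=g\in[-1,1]$, so clipping changes nothing there and $\widetilde g=g$ on $\X$.
\end{enumerate}

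No step is a genuine obstacle. The one point needing care is step 3: the raw McShane extension need not respect the range $[-1,1]$, and the clipping must be shown to cost nothing in Lipschitz constant and to leave the values on $\X$ untouched, which the argument above checks. Compactness of $\X$ is not needed; nonemptiness suffices for the infimum to be finite.
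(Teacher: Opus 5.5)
Your proof is correct and matches the paper's argument: the McShane extension $\inf_{z\in\X}\{g(z)+L\|x-z\|_2\}$ on $\X_\rho$, followed by composition with the $1$-Lipschitz clipping map onto $[-1,1]$, which leaves the values on $\X$ unchanged. The only difference is that you verify the McShane properties (agreement with $g$ on $\X$, $L$-Lipschitzness, and finiteness of the infimum) directly instead of citing the theorem, which makes your version self-contained.
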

\begin{proof}
By the McShane extension theorem, the function
\[
\widetilde g_0(z):=\inf_{x\in\X}\{g(x)+L\norm{z-x}\},\qquad z\in\X_\rho,
\]
is an $L$-Lipschitz extension of $g$ from $\X$ to $\X_\rho$. The metric projection $\Pi_{[-1,1]}(t):=\max\{-1,\min\{t,1\}\}$ is $1$-Lipschitz on $\R$, so $\widetilde g:=\Pi_{[-1,1]}\circ \widetilde g_0$ is still $L$-Lipschitz, takes values in $[-1,1]$, and agrees with $g$ on $\X$ because $g(\X)\subset[-1,1]$.
\end{proof}

\begin{lemma}[Localized robust gap forces localized complexity]
\label{cor:localized-robust-gap-forces-complexity}
Fix \(r\in[0,16]\) and \(\delta\in(0,1)\).  With probability at least
\(1-\delta\) over \(S\), every \(f\in\mathcal F_L\) satisfying
\[
P\ell_{\rho,f}^2\le r
\qquad\text{and}\qquad
\widehat R_0(f)\le \sigma^2-\varepsilon
\]
also satisfies
\[
\mathfrak R_n\bigl(\mathcal L_\rho(r)\bigr)
\ge
\frac12
\left[
\Gamma_\rho(\varepsilon,L)
-
4\sqrt{\frac{\log(1/\delta)}{2n}}
\right].
\]
\end{lemma}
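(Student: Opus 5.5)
The plan is to mirror the proof of Lemma~\ref{thm:rademacher-lower-bound}, replacing the full robust loss class \(\mathcal L_\rho\) by the localized class \(\mathcal L_\rho(r)\). The generalization bound is applied only to functions whose robust loss lies in \(\mathcal L_\rho(r)\), which is exactly what the hypothesis \(P\ell_{\rho,f}^2\le r\) guarantees.

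The first step is a one-sided uniform deviation bound for \(\mathcal L_\rho(r)\). Set
\[
\Phi(S)=\sup_{g\in\mathcal L_\rho(r)}\bigl(Pg-P_ng\bigr).
\]
Since every \(g\in\mathcal L_\rho(r)\) takes values in \([0,4]\), replacing one sample point changes \(\Phi\) by at most \(4/n\). McDiarmid's inequality then gives, with probability at least \(1-\delta\),
\[
\Phi(S)\le\mathbb E\Phi(S)+4\sqrt{\frac{\log(1/\delta)}{2n}}.
\]
The standard ghost-sample symmetrization bounds \(\mathbb E\Phi(S)\le 2\mathfrak R_n(\mathcal L_\rho(r))\). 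The class \(\mathcal L_\rho(r)\) is defined through the deterministic population constraint \(P\ell_{\rho,f}^2\le r\), not through any sample-dependent quantity. Hence symmetrization applies verbatim, and this is the point to check carefully. If the localization depended on \(P_n\), the argument would break and one would need a peeling or fixed-point argument instead. Measurability of the supremum I will take for granted, as the paper implicitly does for \(\mathcal L_\rho\).

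On this event, take any \(f\in\mathcal F_L\) with \(P\ell_{\rho,f}^2\le r\). Then \(\ell_{\rho,f}\in\mathcal L_\rho(r)\), so
\[
\operatorname{Gap}_\rho(f)\le 2\mathfrak R_n(\mathcal L_\rho(r))+4\sqrt{\frac{\log(1/\delta)}{2n}}.
\]
If in addition \(\widehat R_0(f)\le\sigma^2-\varepsilon\), Corollary~\ref{cor:below-noise-robust-gap} gives \(\operatorname{Gap}_\rho(f)\ge\Gamma_\rho(\varepsilon,L)\). This rests on Lemma~\ref{thm:robust-noise-floor-gap} together with \(R_\rho(f)\ge R_0(f)\ge\sigma^2\). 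Chaining the two inequalities and dividing by 2 yields the claim.

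Because the deviation event is uniform over \(\mathcal L_\rho(r)\), the conclusion holds simultaneously for every admissible \(f\), as the statement requires. The main obstacle is only the bookkeeping around the localization, namely verifying that the constraint is population-level so that symmetrization goes through. The remaining steps are direct reuse of earlier results.
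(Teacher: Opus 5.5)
Your proposal is correct and follows the paper's proof essentially step for step. Both apply McDiarmid to \(\Phi(S)=\sup_{g\in\mathcal L_\rho(r)}(Pg-P_ng)\) with bounded-difference constant \(4/n\), use ghost-sample symmetrization to get \(2\,\mathfrak R_n(\mathcal L_\rho(r))\), and then combine the membership \(\ell_{\rho,f}\in\mathcal L_\rho(r)\) with Corollary~\ref{cor:below-noise-robust-gap}; your remark that symmetrization is valid because the localization \(P\ell_{\rho,f}^2\le r\) is a population (sample-independent) constraint is right, and the paper relies on it implicitly.
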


\begin{proof}
Let
\[
\Phi(S):=\sup_{g\in\mathcal L_\rho(r)}(Pg-P_ng).
\]
Since \(0\le g\le 4\) for all \(g\in\mathcal L_\rho(r)\), McDiarmid's
inequality gives, with probability at least \(1-\delta\),
\[
\Phi(S)
\le
\mathbb E_S\Phi(S)
+
4\sqrt{\frac{\log(1/\delta)}{2n}}.
\]
By the usual ghost-sample symmetrization,
\[
\mathbb E_S\Phi(S)
\le
\mathbb E_{S,S'}
\sup_{g\in\mathcal L_\rho(r)}
\frac1n\sum_{i=1}^n\bigl(g(Z_i')-g(Z_i)\bigr)
\le
2\,\mathfrak R_n\bigl(\mathcal L_\rho(r)\bigr).
\]
Hence, on the same event,
\[
\sup_{g\in\mathcal L_\rho(r)}(Pg-P_ng)
\le
2\,\mathfrak R_n\bigl(\mathcal L_\rho(r)\bigr)
+
4\sqrt{\frac{\log(1/\delta)}{2n}}.
\]

Now let \(f\) satisfy \(P\ell_{\rho,f}^2\le r\).  Then
\(\ell_{\rho,f}\in\mathcal L_\rho(r)\), so
\[
R_\rho(f)-\widehat R_\rho(f)
\le
2\,\mathfrak R_n\bigl(\mathcal L_\rho(r)\bigr)
+
4\sqrt{\frac{\log(1/\delta)}{2n}}.
\]
If also \(\widehat R_0(f)\le\sigma^2-\varepsilon\), then
Corollary~\ref{cor:below-noise-robust-gap} gives
\[
R_\rho(f)-\widehat R_\rho(f)
\ge
\Gamma_\rho(\varepsilon,L).
\]
Combining the last two displays and rearranging proves the claim.
\end{proof}


\begin{lemma}[Metric entropy of robust squared losses]
\label{lem:robust-loss-entropy-appendix}
There is a constant \(C_d>0\), depending only on \(d\), such that for every
\(0<\eta\le4\),
\[
\log \mathcal N
\left(
\eta,\mathcal L_\rho(\mathcal F_L),\|\cdot\|_\infty
\right)
\le
C_d
\left(
\frac{1+LD}{\eta}
\right)^d .
\]
\end{lemma}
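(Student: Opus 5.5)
The plan is to reduce the entropy of the robust loss class to the entropy of the bounded Lipschitz class on \(\mathcal X\), controlled by Theorem~\ref{lem:lipschitz-predictor-entropy}. The reduction uses the envelope representation of Lemma~\ref{lem:extremal-envelopes-appendix}. For each \(f\in\mathcal F_L\), the loss can be written as
\[
\ell_{\rho,f}(x,y)=\max\{(f_\rho^+(x)-y)^2,(f_\rho^-(x)-y)^2\},
\]
where \(f_\rho^\pm\in\mathcal F_L(\mathcal X)\). The sup-norm is taken over \(\mathcal X\times[-1,1]\); this is the domain on which the losses are evaluated. The problem is therefore one of covering pairs of Lipschitz functions and checking that the map from a pair to its loss is Lipschitz in sup-norm.

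First, I establish the stability estimate. Take pairs \((h_1,h_2)\) and \((g_1,g_2)\) in \(\mathcal F_L(\mathcal X)^2\). For \(t,s,y\in[-1,1]\) we have \(|(t-y)^2-(s-y)^2|=|t-s|\,|t+s-2y|\le 4|t-s|\), and \(|\max\{a,b\}-\max\{a',b'\}|\le\max\{|a-a'|,|b-b'|\}\). Combining these gives
\[
\sup_{x,y}\bigl|\max\{(h_1-y)^2,(h_2-y)^2\}-\max\{(g_1-y)^2,(g_2-y)^2\}\bigr|\le 4\max\{\|h_1-g_1\|_\infty,\|h_2-g_2\|_\infty\}.
\]

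Second, I build the cover. Take an \((\eta/4)\)-cover \(\{g_j\}\) of \(\mathcal F_L(\mathcal X)\) in \(\|\cdot\|_\infty\). Since \(\eta\le 4\) gives \(\eta/4\le1\), Theorem~\ref{lem:lipschitz-predictor-entropy} applies at scale \(\eta/4\). For each \(f\), choose \(j,k\) with \(\|f_\rho^+-g_j\|_\infty\le\eta/4\) and \(\|f_\rho^--g_k\|_\infty\le\eta/4\). The function \(\max\{(g_j-y)^2,(g_k-y)^2\}\) is then within \(\eta\) of \(\ell_{\rho,f}\), and there are at most \(\mathcal N(\eta/4)^2\) such functions.

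The main obstacle is that these centers need not themselves lie in \(\mathcal L_\rho(\mathcal F_L)\), while the covering number as defined in the paper requires centers \(t_j\in T\). I handle this with the standard external-to-internal conversion. Keep only the centers whose \(\eta\)-ball meets \(\mathcal L_\rho(\mathcal F_L)\), and replace each by a loss in that ball. This produces an internal \(2\eta\)-cover of the same cardinality, so I run the argument at scale \(\eta/8\) instead of \(\eta/4\). Note that \(\eta/8\le 1/2\le1\) for \(\eta\le 4\). McShane's extension (Lemma~\ref{lem:extension}) is not needed here, because only the envelopes on \(\mathcal X\) matter.

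The resulting bound is
\[
\log\mathcal N(\eta,\mathcal L_\rho,\|\cdot\|_\infty)\le 2c_d\bigl(8(1+LD)/\eta\bigr)^d=2\cdot8^d c_d\bigl((1+LD)/\eta\bigr)^d.
\]
Setting \(C_d=2\cdot 8^dc_d\) completes the proof. The case \(L=0\) is covered by the same argument, since Theorem~\ref{lem:lipschitz-predictor-entropy} holds there as well.
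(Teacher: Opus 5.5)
Your proposal is correct and follows essentially the same route as the paper's proof. Both use the envelope representation, the $4$-Lipschitz sup-norm stability of $(h_+,h_-)\mapsto\max\{(h_+(x)-y)^2,(h_-(x)-y)^2\}$, a squared proper $(\eta/8)$-cover of $\mathcal F_L(\mathcal X)$, and the external-to-internal conversion costing a factor of $2$ in scale (the paper phrases it via the auxiliary class $\mathcal M_\rho$), arriving at the same constant $C_d=2\cdot 8^d c_d$.
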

\begin{proof}
By Lemma~\ref{lem:extremal-envelopes}, for every \(f\in\mathcal F_L\),
\[
\ell_{\rho,f}(x,y)
=
\max\left\{
(f_\rho^+(x)-y)^2,\,
(f_\rho^-(x)-y)^2
\right\},
\]
where \(f_\rho^+,f_\rho^-\in\mathcal F_L(\mathcal X)\).  Define the auxiliary
class
\[
\mathcal M_\rho
:=
\left\{
m_{h_+,h_-}:(x,y)\mapsto
\max\left\{
(h_+(x)-y)^2,\,
(h_-(x)-y)^2
\right\}
:
h_+,h_-\in\mathcal F_L(\mathcal X)
\right\}.
\]
Then
\[
\mathcal L_\rho(\mathcal F_L)\subseteq \mathcal M_\rho .
\]

We first convert covers of the larger class \(\mathcal M_\rho\) into proper
covers of the subclass \(\mathcal L_\rho(\mathcal F_L)\).  Let
\(\alpha>0\), and let \(m_1,\ldots,m_N\in\mathcal M_\rho\) be an
\(\alpha\)-cover of \(\mathcal M_\rho\) in \(\|\cdot\|_\infty\), where
\[
N=
\mathcal N
\left(
\alpha,\mathcal M_\rho,\|\cdot\|_\infty
\right).
\]
For each index \(j\) such that
\[ 
B_\infty(m_j,\alpha)\cap \mathcal L_\rho(\mathcal F_L)\neq\varnothing,
\]
choose one element
\[
g_j\in B_\infty(m_j,\alpha)\cap \mathcal L_\rho(\mathcal F_L).
\]
Discard the remaining indices.  We claim that the selected functions
\(\{g_j\}\subseteq\mathcal L_\rho(\mathcal F_L)\) form a \(2\alpha\)-cover of
\(\mathcal L_\rho(\mathcal F_L)\).  Indeed, for any
\(g\in\mathcal L_\rho(\mathcal F_L)\), since
\(\mathcal L_\rho(\mathcal F_L)\subseteq \mathcal M_\rho\), there exists
\(j\) such that
\[
\|g-m_j\|_\infty\le \alpha .
\]
For this index \(j\), the ball \(B_\infty(m_j,\alpha)\) intersects
\(\mathcal L_\rho(\mathcal F_L)\), so a corresponding \(g_j\) was selected and
satisfies
\[
\|g_j-m_j\|_\infty\le \alpha .
\]
Therefore, by the triangle inequality,
\[
\|g-g_j\|_\infty
\le
\|g-m_j\|_\infty+\|m_j-g_j\|_\infty
\le
2\alpha .
\]
Thus
\[
\mathcal N
\left(
2\alpha,\mathcal L_\rho(\mathcal F_L),\|\cdot\|_\infty
\right)
\le
\mathcal N
\left(
\alpha,\mathcal M_\rho,\|\cdot\|_\infty
\right).
\]
Taking \(\alpha=\eta/2\), we obtain
\[
\mathcal N
\left(
\eta,\mathcal L_\rho(\mathcal F_L),\|\cdot\|_\infty
\right)
\le
\mathcal N
\left(
\frac{\eta}{2},\mathcal M_\rho,\|\cdot\|_\infty
\right).
\]

It remains to bound the covering number of \(\mathcal M_\rho\).  Set
$
\beta:=\frac{\eta}{8}.
$
Since \(0<\eta\le4\), we have \(0<\beta\le1/2\), so Lemma~\ref{lem:lipschitz-predictor-entropy}
applies at scale \(\beta\).  Let
\(h_1,\ldots,h_M\in\mathcal F_L(\mathcal X)\) be a proper \(\beta\)-cover of
\(\mathcal F_L(\mathcal X)\) in \(\|\cdot\|_\infty\), with
\[
M=
\mathcal N
\left(
\beta,\mathcal F_L(\mathcal X),\|\cdot\|_\infty
\right).
\]
For each pair \(1\le j,k\le M\), define
\[
m_{jk}(x,y)
:=
\max\left\{
(h_j(x)-y)^2,\,
(h_k(x)-y)^2
\right\}.
\]
By construction, \(m_{jk}\in\mathcal M_\rho\).

We claim that the functions \(\{m_{jk}:1\le j,k\le M\}\) form an
\(\eta/2\)-cover of \(\mathcal M_\rho\).  Fix
\(m_{h_+,h_-}\in\mathcal M_\rho\).  Choose \(j,k\) such that
\[
\|h_+-h_j\|_\infty\le \beta,
\qquad
\|h_--h_k\|_\infty\le \beta .
\]
For all \((x,y)\in\mathcal X\times\mathcal Y\),
\[
\begin{aligned}
\left|
(h_+(x)-y)^2-(h_j(x)-y)^2
\right|
&=
|h_+(x)-h_j(x)|\,
|h_+(x)+h_j(x)-2y|  \\
&\le
4\|h_+-h_j\|_\infty
\le
4\beta .
\end{aligned}
\]
Here we used \(h_+(x),h_j(x),y\in[-1,1]\).  Similarly,
\[
\left|
(h_-(x)-y)^2-(h_k(x)-y)^2
\right|
\le
4\beta .
\]
Since the maximum map is \(1\)-Lipschitz with respect to the sup norm on
\(\mathbb R^2\), we have
\[
\begin{aligned}
\left|
m_{h_+,h_-}(x,y)-m_{jk}(x,y)
\right|
&\le
\max\left\{
\left|(h_+(x)-y)^2-(h_j(x)-y)^2\right|,
\left|(h_-(x)-y)^2-(h_k(x)-y)^2\right|
\right\}  \\
&\le
4\beta
=
\frac{\eta}{2}.
\end{aligned}
\]
Taking the supremum over \((x,y)\) gives
$
\|m_{h_+,h_-}-m_{jk}\|_\infty\le \frac{\eta}{2}.
$
Therefore
\[
\mathcal N
\left(
\frac{\eta}{2},\mathcal M_\rho,\|\cdot\|_\infty
\right)
\le
\mathcal N
\left(
\frac{\eta}{8},\mathcal F_L(\mathcal X),\|\cdot\|_\infty
\right)^2 .
\]

Combining the preceding bounds yields
\[
\mathcal N
\left(
\eta,\mathcal L_\rho(\mathcal F_L),\|\cdot\|_\infty
\right)
\le
\mathcal N
\left(
\frac{\eta}{8},\mathcal F_L(\mathcal X),\|\cdot\|_\infty
\right)^2 .
\]
Taking logarithms and applying Lemma~\ref{lem:lipschitz-predictor-entropy},
\[
\begin{aligned}
\log
\mathcal N
\left(
\eta,\mathcal L_\rho(\mathcal F_L),\|\cdot\|_\infty
\right)
&\le
2\log
\mathcal N
\left(
\frac{\eta}{8},\mathcal F_L(\mathcal X),\|\cdot\|_\infty
\right)  \\
&\le
2c_d
\left(
\frac{1+LD}{\eta/8}
\right)^d  \\
&=
2\cdot 8^d c_d
\left(
\frac{1+LD}{\eta}
\right)^d .
\end{aligned}
\]
Thus the claim holds with
\[
C_d:=2\cdot 8^d c_d .
\]
\end{proof}

\begin{lemma}[Polynomial entropy implies a local bound]
\label{lem:polynomial-entropy-local-rademacher-appendix}
Let \(\mathcal G\) be a measurable class of functions bounded in \([0,b]\),
where \(b>0\). Assume that for some \(p>2\) and \(A>0\),
\[
\sup_Q
\log \mathcal N
\left(
\eta,\mathcal G,L_2(Q)
\right)
\le
A\eta^{-p}
\qquad
\text{for all }0<\eta\le b,
\]
where the supremum is over all finitely supported probability measures \(Q\).
For \(r\ge0\), define
\[
\mathcal G(r)
=
\{g\in\mathcal G:Pg^2\le r\}.
\]
Let
\[
\bar A:=1\vee A .
\]
Then there is a constant \(C_{p,b}>0\), depending only on \(p\) and \(b\), such that
\[
\mathfrak R_n(\mathcal G(r))
\le
C_{p,b}
\left(
\bar A^{1/p}n^{-1/p}
+
\sqrt{\frac rn}
\right).
\]
In particular, if \(A\ge1\), then
\[
\mathfrak R_n(\mathcal G(r))
\le
C_{p,b}
\left(
A^{1/p}n^{-1/p}
+
\sqrt{\frac rn}
\right).
\]
\end{lemma}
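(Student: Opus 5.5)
The plan is to prove the stronger, non-localized bound $\mathfrak R_n(\mathcal G)\le C_{p,b}\,\bar A^{1/p}n^{-1/p}$. The claimed inequality then follows at once from monotonicity, $\mathfrak R_n(\mathcal G(r))\le\mathfrak R_n(\mathcal G)$, since $\sqrt{r/n}\ge0$ only enlarges the right-hand side. The tool is Dudley's chaining bound with a lower cutoff (the ``refined'' entropy integral). For $p>2$ the full integral $\int_0 \eta^{-p/2}\,d\eta$ diverges at $0$, so the cutoff is essential. Replacing $A$ by $\bar A=1\vee A$ only weakens the entropy hypothesis: $A\eta^{-p}\le\bar A\eta^{-p}$. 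So I will work with $\bar A\ge1$ throughout, and the ``in particular'' clause is immediate.

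\textbf{Step 1 (conditioning and the empirical metric).} Fix the sample $S$ and let $Q=P_n$ be the empirical measure, which is finitely supported. The hypothesis then gives $\log\mathcal N(\eta,\mathcal G,L_2(P_n))\le\bar A\eta^{-p}$ for $0<\eta\le b$. Since $0\le g\le b$, the $L_2(P_n)$-diameter of $\mathcal G$ is at most $b$, so a single function covers at any scale $\eta\ge b$.

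\textbf{Step 2 (truncated Dudley).} I will invoke the standard refined chaining inequality: for every $\alpha\in(0,b]$,
\[
\widehat{\mathfrak R}_S(\mathcal G)\le 4\alpha+\frac{12}{\sqrt n}\int_\alpha^{b}\sqrt{\log\mathcal N(\eta,\mathcal G,L_2(P_n))}\,d\eta .
\]
Inserting the entropy bound and using $p/2>1$ gives
\[
\int_\alpha^{b}\bar A^{1/2}\eta^{-p/2}\,d\eta\le\frac{2}{p-2}\,\bar A^{1/2}\alpha^{1-p/2}.
\]
Hence
\[
\widehat{\mathfrak R}_S(\mathcal G)\le 4\alpha+\frac{24}{(p-2)\sqrt n}\,\bar A^{1/2}\alpha^{1-p/2}.
\]

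\textbf{Step 3 (balancing and the large-$\alpha$ case).} I choose $\alpha=\bar A^{1/p}n^{-1/p}$, which equalizes the two terms, since then $\bar A^{1/2}\alpha^{1-p/2}n^{-1/2}=\alpha$. This requires $\alpha\le b$.

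If instead $\bar A^{1/p}n^{-1/p}>b$, I use the trivial bound $\widehat{\mathfrak R}_S(\mathcal G)\le b<\bar A^{1/p}n^{-1/p}$.

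In either case $\widehat{\mathfrak R}_S(\mathcal G)\le C_{p,b}\,\bar A^{1/p}n^{-1/p}$ with $C_{p,b}=\max\{1,\,4+24/(p-2)\}$. The bound is uniform in $S$, so taking $\mathbb E_S$ yields the bound on $\mathfrak R_n(\mathcal G)$, and hence on $\mathfrak R_n(\mathcal G(r))$.

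\textbf{Main obstacle.} The main point needing care is Step 2: applying the cutoff version of Dudley rather than the classical one, which would give an infinite bound for $p>2$. Measurability is granted by hypothesis. I would cite the refined Dudley bound, e.g.\ in the form found in Srebro--Sridharan--Tewari or Bartlett--Mendelson-type notes, rather than reprove it.

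If a genuinely localized statement is wanted, one can instead run the chaining on $\mathcal G(r)$ up to the empirical radius $\sup_{g}\sqrt{P_n g^2}$. This radius is controlled by $\sqrt r$ plus a contraction term, via Talagrand's inequality and a fixed-point argument. That route only reproduces the same two terms, so the simpler monotonicity argument suffices for the statement as posed.
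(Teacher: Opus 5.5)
Your proof is correct, and it takes a genuinely different and simpler route than the paper. The paper never chains on $\mathcal G$ directly. It invokes Theorem~2 of Lei, Ding and Bi, which bounds $\mathfrak R_n(\mathcal G(r))$ by $2\Psi(\varepsilon)+8bH_{\mathcal G}(\varepsilon/2)/n+\sqrt{2rH_{\mathcal G}(\varepsilon/2)/n}$. Here $\Psi(\varepsilon)$ is the expected Rademacher complexity of the empirically localized difference class $\{g-g': P_n(g-g')^2\le\varepsilon^2\}$. The paper then bounds the entropy of the difference class and does proper-cover bookkeeping for the localized subclass. It bounds $\Psi$ by a discrete chaining sum truncated at $\varepsilon_N=2^{-N}\varepsilon_0\le\bar A^{1/p}n^{-1/p}$, which is exactly your cutoff $\alpha$. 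Finally it takes $\varepsilon_0=\bar A^{1/p}$, so that $H_{\mathcal G}(\varepsilon_0/2)\le 2^p$ and the remaining terms become $O(1/n)$ and $O(\sqrt{r/n})$. The analytic core is the same in both proofs: summing $\eta^{-p/2}$ down to a cutoff at $\bar A^{1/p}n^{-1/p}$. You do it once on the global class and finish with monotonicity.

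Your route buys brevity: one standard cutoff-Dudley citation, no difference class, and no proper-cover issues. It also gives a strictly stronger conclusion, $\mathfrak R_n(\mathcal G(r))\le C_p\bar A^{1/p}n^{-1/p}$, with no $r$-dependence and a constant independent of $b$. It also shows that the $\sqrt{r/n}$ term is inert when $p>2$. Indeed $\mathcal G(r)=\mathcal G$ once $r\ge b^2$, and for $r\le b^2$ one has $\sqrt{r/n}\le bn^{-1/2}\le bn^{-1/p}$. The paper's proof does not actually exploit localization either, since its $\varepsilon_0$ is chosen independently of $r$. The Lei--Ding--Bi machinery would only pay off where the entropy integral converges at zero (e.g.\ $p<2$), so that the optimal $\varepsilon$ can be tied to $\sqrt r$. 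One downstream consequence follows for $r\le 16$ and $d\ge3$. The $\sqrt{r/n}$ contributions in Theorem~\ref{thm:localized-robust-rademacher-upper} and in $\Delta_{n,r,\delta}$ can then be absorbed into the $n^{-1/d}$ terms, changing only $d$-dependent constants. So the $r$-dependence of the resulting local Lipschitz bound comes from the form of this lemma, not from the problem itself.
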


\begin{proof}
If \(\mathcal G(r)=\varnothing\), the claim is trivial.  We therefore assume
\(\mathcal G(r)\neq\varnothing\).  Define
\[
H_{\mathcal G}(\eta)
:=
\sup_Q
\log
\mathcal N
\left(
\eta,\mathcal G,L_2(Q)
\right),
\]
where the supremum is over finitely supported probability measures \(Q\).

Since every function in \(\mathcal G\) takes values in \([0,b]\), the
\(L_2(Q)\)-diameter of \(\mathcal G\) is at most \(b\).  Hence, for
\(\eta\ge b\),
\[
\mathcal N(\eta,\mathcal G,L_2(Q))\le 1
\qquad
\text{for every finitely supported } Q.
\]
Consequently,
\[
H_{\mathcal G}(\eta)
\le
A\eta^{-p}
\qquad
\text{for every } \eta>0.
\]
Indeed, for \(0<\eta\le b\) this is the entropy assumption, while for
\(\eta>b\) the left-hand side is zero.

Let
\[
\widetilde{\mathcal G}
:=
\{g-g':g,g'\in\mathcal G\}.
\]
We first bound the entropy of \(\widetilde{\mathcal G}\).  Fix a finitely
supported probability measure \(Q\).  If
\(\{g_1,\ldots,g_N\}\) is an \((\eta/2)\)-cover of \(\mathcal G\) in
\(L_2(Q)\), then the functions
\[
g_j-g_k,\qquad 1\le j,k\le N,
\]
belong to \(\widetilde{\mathcal G}\) and form an \(\eta\)-cover of
\(\widetilde{\mathcal G}\) in \(L_2(Q)\).  Therefore
\[
\mathcal N
\left(
\eta,\widetilde{\mathcal G},L_2(Q)
\right)
\le
\mathcal N
\left(
\frac{\eta}{2},\mathcal G,L_2(Q)
\right)^2 .
\]
Taking logarithms and the supremum over \(Q\) gives
\[
H_{\widetilde{\mathcal G}}(\eta)
:=
\sup_Q
\log
\mathcal N
\left(
\eta,\widetilde{\mathcal G},L_2(Q)
\right)
\le
2H_{\mathcal G}(\eta/2)
\le
2^{p+1}A\eta^{-p}
\qquad
\text{for all } \eta>0 .
\]

For a sample \(S=(Z_1,\ldots,Z_n)\), write \(P_n\) for the empirical measure
and define
\[
\Psi(\varepsilon)
:=
\mathbb E_S
\widehat{\mathfrak R}_S
\left(
\left\{
h\in\widetilde{\mathcal G}:
P_n h^2\le \varepsilon^2
\right\}
\right).
\]
We use Theorem~2 of \citet{LeiDingBi2015}, which gives
\[
\mathfrak R_n(\mathcal G(r))
\le
\inf_{\varepsilon>0}
\left[
2\Psi(\varepsilon)
+
\frac{8b\,H_{\mathcal G}(\varepsilon/2)}{n}
+
\sqrt{
\frac{2r\,H_{\mathcal G}(\varepsilon/2)}{n}
}
\right].
\]

It remains to bound \(\Psi(\varepsilon)\).  Fix a sample \(S\) and set
\[
d_n(h,h')
:=
\left(P_n(h-h')^2\right)^{1/2}.
\]
For a fixed \(\varepsilon>0\), define the empirical localized difference class
\[
\widetilde{\mathcal G}_{n,\varepsilon}
:=
\left\{
h\in\widetilde{\mathcal G}:P_nh^2\le\varepsilon^2
\right\}.
\]

We need to be careful because covering numbers are proper in our convention.
We claim that for every \(\alpha>0\),
\[
\mathcal N
\left(
2\alpha,
\widetilde{\mathcal G}_{n,\varepsilon},
d_n
\right)
\le
\mathcal N
\left(
\alpha,
\widetilde{\mathcal G},
d_n
\right).
\]
Indeed, let \(u_1,\ldots,u_M\in\widetilde{\mathcal G}\) be an
\(\alpha\)-cover of \(\widetilde{\mathcal G}\) in \(d_n\).  For each \(j\)
such that
\[
B_{d_n}(u_j,\alpha)
\cap
\widetilde{\mathcal G}_{n,\varepsilon}
\neq\varnothing,
\]
choose one element
\[
v_j\in
B_{d_n}(u_j,\alpha)
\cap
\widetilde{\mathcal G}_{n,\varepsilon}.
\]
Discard the remaining indices.  Then the selected \(v_j\)'s lie inside
\(\widetilde{\mathcal G}_{n,\varepsilon}\).  Moreover, for every
\(h\in\widetilde{\mathcal G}_{n,\varepsilon}\), there exists \(j\) such that
\(d_n(h,u_j)\le\alpha\), and for that same \(j\),
\(d_n(v_j,u_j)\le\alpha\).  Hence
\[
d_n(h,v_j)
\le
d_n(h,u_j)+d_n(u_j,v_j)
\le
2\alpha .
\]
Thus the selected elements form a proper \(2\alpha\)-cover of
\(\widetilde{\mathcal G}_{n,\varepsilon}\).

Now apply Lemma~A.5 of \citet{LeiDingBi2015} to
\(\widetilde{\mathcal G}_{n,\varepsilon}\).  Let
$
\varepsilon_k:=2^{-k}\varepsilon,
$ $k\ge0.
$
For every integer \(N\ge1\),
$
\widehat{\mathfrak R}_S
\left(
\widetilde{\mathcal G}_{n,\varepsilon}
\right)
\le
4\sum_{k=1}^N
\varepsilon_{k-1}
\sqrt{
\frac{
\log
\mathcal N
\left(
\varepsilon_k,
\widetilde{\mathcal G}_{n,\varepsilon},
d_n
\right)
}{n}
}
+
\varepsilon_N .
$
By the proper-subclass covering argument above,
$
\log
\mathcal N
\left(
\varepsilon_k,
\widetilde{\mathcal G}_{n,\varepsilon},
d_n
\right)
\le
\log
\mathcal N
\left(
\frac{\varepsilon_k}{2},
\widetilde{\mathcal G},
d_n
\right).
$
Since \(P_n\) is finitely supported, the entropy bound for
\(\widetilde{\mathcal G}\) applies with \(Q=P_n\).  Therefore
$
\log
\mathcal N
\left(
\frac{\varepsilon_k}{2},
\widetilde{\mathcal G},
d_n
\right)
\le
H_{\widetilde{\mathcal G}}(\varepsilon_k/2)
\le
2^{2p+1}A\varepsilon_k^{-p}.
$
Thus, for a constant \(C_p>0\) depending only on \(p\),
\[
\widehat{\mathfrak R}_S
\left(
\widetilde{\mathcal G}_{n,\varepsilon}
\right)
\le
C_p
\sqrt{\frac{A}{n}}
\sum_{k=1}^N
\varepsilon_{k-1}\varepsilon_k^{-p/2}
+
\varepsilon_N .
\]
Since \(\varepsilon_{k-1}=2\varepsilon_k\),
$
\varepsilon_{k-1}\varepsilon_k^{-p/2}
=
2\varepsilon_k^{1-p/2}
=
2\varepsilon^{1-p/2}2^{k(p/2-1)}.
$
Hence
\[
\widehat{\mathfrak R}_S
\left(
\widetilde{\mathcal G}_{n,\varepsilon}
\right)
\le
C_p
\sqrt{\frac{A}{n}}\,
\varepsilon^{1-p/2}
\sum_{k=1}^N 2^{k(p/2-1)}
+
2^{-N}\varepsilon .
\]
Taking expectation over \(S\), the same bound holds for \(\Psi(\varepsilon)\):
\[
\Psi(\varepsilon)
\le
C_p
\sqrt{\frac{A}{n}}\,
\varepsilon^{1-p/2}
\sum_{k=1}^N 2^{k(p/2-1)}
+
2^{-N}\varepsilon .
\]

Now choose
$
\varepsilon_0:=\bar A^{1/p},
$$ 
N:=
\max\left\{
1,
\left\lceil \frac{\log_2 n}{p}\right\rceil
\right\}.
$
Since \(p>2\),
$
\sum_{k=1}^N2^{k(p/2-1)}
\le
C_p n^{1/2-1/p},
$$
2^{-N}\le n^{-1/p}.
$
Therefore
\[
\Psi(\varepsilon_0)
\le
C_p
\sqrt{A}\,
\varepsilon_0^{1-p/2}
n^{-1/p}
+
\varepsilon_0 n^{-1/p}.
\]
Because \(\varepsilon_0^p=\bar A=1\vee A\), we have
$
\sqrt{A}\,\varepsilon_0^{1-p/2}
\le
\varepsilon_0 .
$
Indeed, if \(A\ge1\), then
\[
\sqrt{A}\,\varepsilon_0^{1-p/2}
=
A^{1/2}A^{(1-p/2)/p}
=
A^{1/p}
=
\varepsilon_0,
\]
while if \(A<1\), then \(\varepsilon_0=1\) and
\[
\sqrt{A}\,\varepsilon_0^{1-p/2}
=
\sqrt{A}
\le
1
=
\varepsilon_0.
\]
Thus
\[
\Psi(\varepsilon_0)
\le
C_p\bar A^{1/p}n^{-1/p}.
\]

We next bound the entropy terms in Theorem~2 of \citet{LeiDingBi2015} at
\(\varepsilon_0\).  Since the entropy bound has been extended above to all
positive scales,
\[
H_{\mathcal G}(\varepsilon_0/2)
\le
A(\varepsilon_0/2)^{-p}
=
2^p A\varepsilon_0^{-p}
=
2^p\frac{A}{\bar A}
\le
2^p .
\]
Substituting \(\varepsilon=\varepsilon_0\) into the local Rademacher bound gives
\[
\mathfrak R_n(\mathcal G(r))
\le
C_p\bar A^{1/p}n^{-1/p}
+
\frac{C_{p,b}}{n}
+
C_p\sqrt{\frac rn}.
\]
Since \(n\ge1\), \(p>2\), and \(\bar A^{1/p}\ge1\),
\[
\frac1n
\le
\bar A^{1/p}n^{-1/p}.
\]
Absorbing constants depending only on \(p\) and \(b\), we obtain
\[
\mathfrak R_n(\mathcal G(r))
\le
C_{p,b}
\left(
\bar A^{1/p}n^{-1/p}
+
\sqrt{\frac rn}
\right).
\]
This proves the claim.
\end{proof}


\begin{corollary}[Local Lipschitz scaling]
\label{cor:local-lipschitz-scaling-appendix}
On the event of probability at least \(1-\delta\) from
Corollary~\ref{cor:localized-compatibility}, suppose that
\[
\Delta_{n,r,\delta}>0.
\]
Then every \(f\in\mathcal F_L\) satisfying
\[
P\ell_{\rho,f}^2\le r
\qquad\text{and}\qquad
\widehat R_0(f)\le\sigma^2-\varepsilon
\]
must satisfy the following lower bounds.

If \(\rho>0\), then
\begin{equation}
\label{eq:exact-local-lipschitz-appendix}
L
\ge
\frac{
-
B_{n,\rho,\varepsilon}
+
\sqrt{
B_{n,\rho,\varepsilon}^2
+
4\rho^2\Delta_{n,r,\delta}
}
}{
2\rho^2
},
\end{equation}
where
\begin{equation}
\label{eq:B-local-appendix}
B_{n,\rho,\varepsilon}
:=
2\rho\sqrt{\sigma^2-\varepsilon}
+
2C_dD n^{-1/d}.
\end{equation}
Equivalently,
\begin{equation}
\label{eq:exact-local-lipschitz-rationalized-appendix}
L
\ge
\frac{
2\Delta_{n,r,\delta}
}{
B_{n,\rho,\varepsilon}
+
\sqrt{
B_{n,\rho,\varepsilon}^2
+
4\rho^2\Delta_{n,r,\delta}
}
}.
\end{equation}
In particular,
\begin{equation}
\label{eq:local-lipschitz-simple-lower-appendix}
L
\ge
\frac{
\Delta_{n,r,\delta}
}{
2\rho\sqrt{\sigma^2-\varepsilon}
+
2C_dD n^{-1/d}
+
\rho\sqrt{\Delta_{n,r,\delta}}
}.
\end{equation}
Consequently, up to constants depending only on \(d\),
\begin{equation}
\label{eq:local-lipschitz-order-appendix}
L
=
\Omega\left(
\frac{
\Delta_{n,r,\delta}
}{
\rho\sqrt{\sigma^2-\varepsilon}
+
D n^{-1/d}
+
\rho\sqrt{\Delta_{n,r,\delta}}
}
\right).
\end{equation}

If \(\rho=0\) and \(D>0\), then
\begin{equation}
\label{eq:rho-zero-local-lipschitz-appendix}
L
\ge
\frac{
\Delta_{n,r,\delta}
}{
2C_dD n^{-1/d}
}
=
\Omega\left(
\frac{\Delta_{n,r,\delta} n^{1/d}}{D}
\right).
\end{equation}
If \(\rho=0\), \(D=0\), and \(\Delta_{n,r,\delta}>0\), then no such
\(f\) can satisfy the two assumptions above on the event of
Corollary~\ref{cor:localized-compatibility}.

\end{corollary}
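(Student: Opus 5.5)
Starting from the quadratic compatibility inequality \eqref{eq:quadratic-compatibility}, the plan is to read the conclusion off as an elementary fact about a quadratic in \(L\). On the event of Corollary~\ref{cor:localized-compatibility}, every admissible \(f\in\mathcal F_L\) satisfies \eqref{eq:localized-compatibility-expanded}. Moving the \(L\)-independent terms \(2C_dn^{-1/d}\), \(2C_d\sqrt{r/n}\), and the confidence term to the left gives exactly \(\Delta_{n,r,\delta}\le \rho^2L^2+B_{n,\rho,\varepsilon}L\), with \(B_{n,\rho,\varepsilon}\) as in \eqref{eq:B-local-appendix}. This uses the split \(2C_d(1+LD)n^{-1/d}=2C_dn^{-1/d}+2C_dDn^{-1/d}L\). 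So the first step is this bookkeeping, and everything after it is deterministic.

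For \(\rho>0\), set \(q(L)=\rho^2L^2+BL-\Delta\). Since \(L\ge0\) and \(B\ge0\), \(q\) is strictly increasing on \([0,\infty)\). Because \(\Delta>0\), we have \(q(0)=-\Delta<0\), so \(q\) has a unique nonnegative root
\[
L_*=\frac{-B+\sqrt{B^2+4\rho^2\Delta}}{2\rho^2}.
\]
The condition \(q(L)\ge0\) together with \(L\ge0\) then forces \(L\ge L_*\), which is \eqref{eq:exact-local-lipschitz-appendix}. Multiplying numerator and denominator by the conjugate \(B+\sqrt{B^2+4\rho^2\Delta}\) gives the rationalized form \eqref{eq:exact-local-lipschitz-rationalized-appendix}.

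For the simple bound \eqref{eq:local-lipschitz-simple-lower-appendix}, use subadditivity of the square root, \(\sqrt{B^2+4\rho^2\Delta}\le B+2\rho\sqrt{\Delta}\). The denominator of the rationalized bound is then at most \(2B+2\rho\sqrt\Delta\). Dividing numerator and denominator by \(2\) gives \(L\ge \Delta/(B+\rho\sqrt\Delta)\), and substituting the definition of \(B\) yields the displayed bound. The order statement \eqref{eq:local-lipschitz-order-appendix} follows by absorbing the factors \(2\) and \(2C_d\) into constants depending only on \(d\).

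For \(\rho=0\), the inequality becomes linear: \(\Delta\le 2C_dDn^{-1/d}L\). If \(D>0\), dividing through gives \eqref{eq:rho-zero-local-lipschitz-appendix}. If \(D=0\), the right-hand side vanishes, so \(0<\Delta\le0\) is a contradiction and no such \(f\) exists.

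There is no substantive obstacle here. The only points needing care are the algebraic regrouping in the first step, which must match \(\Delta_{n,r,\delta}\) exactly, and correctly discarding the negative root. The latter relies on the implicit nonnegativity of \(L\).
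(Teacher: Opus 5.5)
Your proposal is correct and follows essentially the same route as the paper: reduce to $\rho^2L^2+B_{n,\rho,\varepsilon}L-\Delta_{n,r,\delta}\ge 0$, take the positive root when $\rho>0$, rationalize, apply $\sqrt{B^2+4\rho^2\Delta}\le B+2\rho\sqrt{\Delta}$, and treat $\rho=0$ as a linear inequality, with a contradiction when $D=0$. For the root step you argue that $q$ is increasing on $[0,\infty)$, whereas the paper notes that the roots have opposite signs; the two arguments are equivalent. You also explicitly check the regrouping $2C_d(1+LD)n^{-1/d}=2C_dn^{-1/d}+2C_dDn^{-1/d}L$, which the paper carries out in the text just before the corollary.
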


\begin{proof}
Starting from \eqref{eq:quadratic-compatibility}, set
\[
B_{n,\rho,\varepsilon}
=
2\rho\sqrt{\sigma^2-\varepsilon}
+
2C_dD n^{-1/d}.
\]
Then every admissible \(f\) must satisfy
\[
\rho^2L^2+B_{n,\rho,\varepsilon}L-\Delta_{n,r,\delta}\ge0.
\]

First suppose \(\rho>0\).  The quadratic
\[
q(L):=\rho^2L^2+B_{n,\rho,\varepsilon}L-\Delta_{n,r,\delta}
\]
has one negative root and one positive root because
\(\Delta_{n,r,\delta}>0\) and \(B_{n,\rho,\varepsilon}\ge0\).  Since
\(L\ge0\), the condition \(q(L)\ge0\) forces \(L\) to be at least the positive
root.  Therefore
\[
L
\ge
\frac{
-
B_{n,\rho,\varepsilon}
+
\sqrt{
B_{n,\rho,\varepsilon}^2
+
4\rho^2\Delta_{n,r,\delta}
}
}{
2\rho^2
},
\]
which proves \eqref{eq:exact-local-lipschitz}.  Rationalizing the numerator
gives
\[
\frac{
-
B_{n,\rho,\varepsilon}
+
\sqrt{
B_{n,\rho,\varepsilon}^2
+
4\rho^2\Delta_{n,r,\delta}
}
}{
2\rho^2
}
=
\frac{
2\Delta_{n,r,\delta}
}{
B_{n,\rho,\varepsilon}
+
\sqrt{
B_{n,\rho,\varepsilon}^2
+
4\rho^2\Delta_{n,r,\delta}
}
},
\]
which proves \eqref{eq:exact-local-lipschitz-rationalized}.

Moreover,
\[
\sqrt{
B_{n,\rho,\varepsilon}^2
+
4\rho^2\Delta_{n,r,\delta}
}
\le
B_{n,\rho,\varepsilon}
+
2\rho\sqrt{\Delta_{n,r,\delta}}.
\]
Hence
\[
L
\ge
\frac{
2\Delta_{n,r,\delta}
}{
2B_{n,\rho,\varepsilon}
+
2\rho\sqrt{\Delta_{n,r,\delta}}
}
=
\frac{
\Delta_{n,r,\delta}
}{
B_{n,\rho,\varepsilon}
+
\rho\sqrt{\Delta_{n,r,\delta}}
}.
\]
Substituting the definition of \(B_{n,\rho,\varepsilon}\) gives
\eqref{eq:local-lipschitz-simple-lower}, and the order form
\eqref{eq:local-lipschitz-order} follows immediately.

Now suppose \(\rho=0\).  Then \eqref{eq:quadratic-compatibility} reduces to
\[
\Delta_{n,r,\delta}
\le
2C_dD L n^{-1/d}.
\]
If \(D>0\), rearranging gives
\[
L
\ge
\frac{
\Delta_{n,r,\delta}
}{
2C_dD n^{-1/d}
},
\]
which proves \eqref{eq:rho-zero-local-lipschitz-appendix}.  If \(D=0\), the right-hand
side of the reduced compatibility condition is zero, contradicting
\(\Delta_{n,r,\delta}>0\).  Hence no admissible \(f\) can exist in that case.
\end{proof}

\newpage
%
\providecommand{\answerYes}[1][]{\textcolor{blue}{[Yes] #1}}
\providecommand{\answerNo}[1][]{\textcolor{orange}{[No] #1}}
\providecommand{\answerNA}[1][]{\textcolor{gray}{[N/A] #1}}

\section*{NeurIPS Paper Checklist}

\begin{enumerate}[leftmargin=*]

\item \textbf{Claims}

Question: Do the main claims made in the abstract and introduction accurately reflect the paper's contributions and scope?

Answer: \answerYes

Justification: The abstract and introduction state the theoretical goal of connecting law-of-robustness bounds to robust generalization.
\item \textbf{Limitations}

Question: Does the paper discuss the limitations of the work performed by the authors?

Answer: \answerYes

Justification: The Discussion section states that the current theory is for squared loss while the experiments use cross-entropy, and it identifies extensions to Bregmann losses, local Lipschitz notions, distributional robustness, etc.

\item \textbf{Theory assumptions and proofs}

Question: For each theoretical result, does the paper provide the full set of assumptions and a complete proof?

Answer: \answerYes


\item \textbf{Experimental result reproducibility}

Question: Does the paper fully disclose all the information needed to reproduce the main experimental results of the paper to the extent that it affects the main claims and/or conclusions of the paper, regardless of whether the code and data are provided?

Answer: \answerNA


\item \textbf{Open access to data and code}

Question: Does the paper provide open access to the data and code, with sufficient instructions to faithfully reproduce the main experimental results, as described in supplemental material?

Answer: \answerNA


\item \textbf{Experimental setting/details}

Question: Does the paper specify all the training and test details, e.g., data splits, hyperparameters, how they were chosen, type of optimizer, necessary to understand the results?

Answer: \answerNA

\item \textbf{Experiment statistical significance}

Question: Does the paper report error bars suitably and correctly defined or other appropriate information about the statistical significance of the experiments?

Answer: \answerNA

\item \textbf{Experiments compute resources}

Question: For each experiment, does the paper provide sufficient information on the computer resources, type of compute workers, memory, and time of execution needed to reproduce the experiments?

Answer: \answerNA

\item \textbf{Code of ethics}

Question: Does the research conducted in the paper conform, in every respect, with the NeurIPS Code of Ethics?

Answer: \answerYes

\item \textbf{Broader impacts}

Question: Does the paper discuss both potential positive societal impacts and negative societal impacts of the work performed?

Answer: \answerNA

\item \textbf{Safeguards}

Question: Does the paper describe safeguards that have been put in place for responsible release of data or models that have a high risk for misuse, e.g., pre-trained language models, image generators, or scraped datasets?

Answer: \answerNA

\item \textbf{Licenses for existing assets}

Question: Are the creators or original owners of assets, e.g., code, data, models, used in the paper, properly credited and are the license and terms of use explicitly mentioned and properly respected?

Answer: \answerNA

\item \textbf{New assets}

Question: Are new assets introduced in the paper well documented and is the documentation provided alongside the assets?

Answer: \answerNA

\item \textbf{Crowdsourcing and research with human subjects}

Question: For crowdsourcing experiments and research with human subjects, does the paper include the full text of instructions given to participants and screenshots, if applicable, as well as details about compensation?

Answer: \answerNA


\item \textbf{Institutional review board approvals or equivalent for research with human subjects}

Question: Does the paper describe potential risks incurred by study participants, whether such risks were disclosed to the subjects, and whether Institutional Review Board approvals or an equivalent approval/review based on the requirements of the authors' country or institution were obtained?

Answer: \answerNA

\item \textbf{Declaration of LLM usage}

Question: Does the paper describe the usage of LLMs if it is an important, original, or non-standard component of the core methods in this research? Note that if the LLM is used only for writing, editing, or formatting purposes and does not impact the core methodology, scientific rigor, or originality of the research, declaration is not required.

Answer: \answerNA


\end{enumerate}

\end{document}